\documentclass{article}

\usepackage{microtype}
\usepackage{graphicx}
\usepackage{booktabs} 

\usepackage[labelformat=simple]{subcaption}

\usepackage{hyperref}

\usepackage[table]{xcolor}
\usepackage{mdframed}
\usepackage{multirow}

\usepackage[accepted]{icml2026}

\usepackage{amsmath}
\usepackage{amssymb}
\usepackage{mathtools}
\usepackage{amsthm}

\usepackage{enumitem}

\usepackage{dsfont}
\usepackage{tcolorbox}
\usepackage{colortbl}
\usepackage{bbm}
\usepackage{multirow}
\usepackage{makecell}
\usepackage{subcaption}
\usepackage{enumitem}

\renewcommand{\algorithmicrequire}{\textbf{Input:}}
\renewcommand{\algorithmicensure}{\textbf{Output:}}

\usepackage[capitalize,noabbrev]{cleveref}

\theoremstyle{plain}
\newtheorem{theorem}{Theorem}[section]
\newtheorem{proposition}[theorem]{Proposition}
\newtheorem{lemma}[theorem]{Lemma}

\theoremstyle{definition}
\newtheorem{definition}[theorem]{Definition}
\newtheorem{assumption}[theorem]{Assumption}
\theoremstyle{remark}
\newtheorem{remark}[theorem]{Remark}

\usepackage[textsize=tiny]{todonotes}
\definecolor{ourblue}{rgb}{0.3,0.3,0.85}
\definecolor{darkgreen}{rgb}{0.0,0.6,0.0}
\definecolor{beaublue}{rgb}{0.9, 0.95, 0.9}
\definecolor{blackish}{rgb}{0.2, 0.2, 0.2}
\definecolor{grayish}{rgb}{0.86274,0.92980,0.98431}

\definecolor{LightCyan}{rgb}{0.88,1,0.88}
\definecolor{LightRed}{rgb}{1,0.88,0.88}
\definecolor{LightBlue}{rgb}{0.86274,0.92980,0.98431}

\definecolor{ao(english)}{rgb}{0.0, 0.5, 0.0}

\icmltitlerunning{Forget by Uncertainty: Orthogonal Entropy Unlearning for Quantized Neural Networks}

\begin{document}

\twocolumn[
  \icmltitle{Forget by Uncertainty: Orthogonal Entropy Unlearning for Quantized Neural Networks}



  \icmlsetsymbol{equal}{*}

  \begin{icmlauthorlist}
    \icmlauthor{Tian Zhang}{equal,yyy}
  \icmlauthor{Yujia Tong}{equal,yyy}
\icmlauthor{Junhao Dong}{sch}
\icmlauthor{Ke Xu}{yyy}
\icmlauthor{Yuze Wang}{yyy}
\icmlauthor{Jingling Yuan}{yyy}
  \end{icmlauthorlist}

  \icmlaffiliation{yyy}{Hubei Key Laboratory of Transportation
Internet of Things, School of Computer Science and Artificial Intelligence, Wuhan University of Technology, China. }

\icmlaffiliation{sch}{College of Computing and Data Science, Nanyang Technological University, Singapore}

  \icmlcorrespondingauthor{Yujia Tong}{tyjjjj@whut.edu.cn}

  \icmlkeywords{Machine Learning, ICML}

  \vskip 0.3in
]



\printAffiliationsAndNotice{\icmlEqualContribution}

\begin{abstract}
  The deployment of quantized neural networks on edge devices, combined with privacy regulations like GDPR, creates an urgent need for machine unlearning in quantized models. However, existing methods face critical challenges: they induce forgetting by training models to memorize incorrect labels, conflating forgetting with misremembering, and employ scalar gradient reweighting that cannot resolve directional conflicts between gradients. We propose \textbf{OEU}, a novel Orthogonal Entropy Unlearning framework with two key innovations: 1) Entropy-guided unlearning provides an unbiased forgetting direction by maximizing prediction uncertainty on forgotten data, avoiding confident misprediction toward any specific class, and 2) Gradient orthogonal projection eliminates interference by projecting forgetting gradients onto the orthogonal complement of retain gradients, providing theoretical guarantees for utility preservation under first-order approximation. Extensive experiments demonstrate that OEU outperforms existing methods in both forgetting effectiveness and retain accuracy.
\end{abstract}

\section{Introduction}
Model quantization is indispensable for deploying deep neural networks on edge devices~\cite{howard2017mobilenets,zhou2018adaptive}, reducing computational overhead through low-bit parameter representations. While this technique enables efficient on-device inference, data protection legislation, including the General Data Protection Regulation (GDPR)~\cite{hoofnagle2019european}, has introduced new challenges for quantized neural networks (QNNs). These regulations establish the \textit{``right to be forgotten''}, granting users the legal authority to demand the erasure of their data from machine learning systems~\cite{nguyen2025survey}. Consequently, implementing effective machine unlearning in QNNs has become an urgent yet underexplored challenge.

Machine unlearning (MU) offers a solution by removing specific data influences from trained models without full retraining~\cite{nguyen2025survey, dong2026machine,mavrothalassitis2026ascent}. Existing approaches include exact unlearning~\cite{bourtoule2021machine}, which retrains on retained data but incurs high computational cost, and approximate unlearning~\cite{he2025towards}, which efficiently modifies model parameters to approximate the retrained state. Approximate methods, \textit{e.g.}, gradient ascent~\cite{graves2021amnesiac,thudi2022unrolling, dong2025stabilizing}, influence functions~\cite{koh2017understanding,izzo2021approximate}, and random label~\cite{golatkar2020eternal} assignment show promising results, yet they are designed for full-precision models and assume continuous parameter spaces. These assumptions fail in quantized networks, where discrete weight representations introduce unique challenges. Q-MUL~\cite{tong2025robust} emerged as the first unlearning framework for QNNs, employing Similar Labels to reduce noise injection and Adaptive Gradient Reweighting to balance forgetting and retain gradients.

\begin{figure*}[t]
\centering
\begin{subfigure}[b]{0.54\textwidth}
    \includegraphics[width=\textwidth]{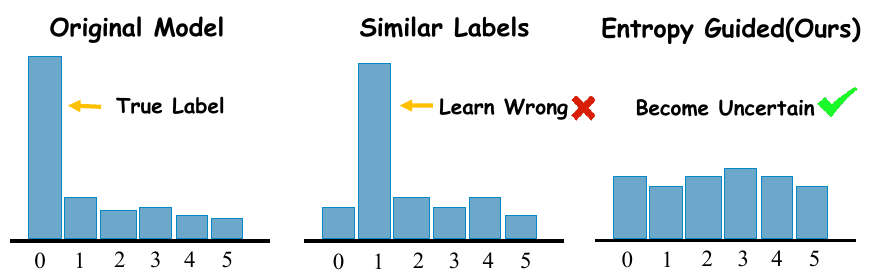}
    \caption{}
    \label{fig1c}
\end{subfigure}
\hfill
\begin{subfigure}[b]{0.18\textwidth}
    \includegraphics[width=\textwidth]{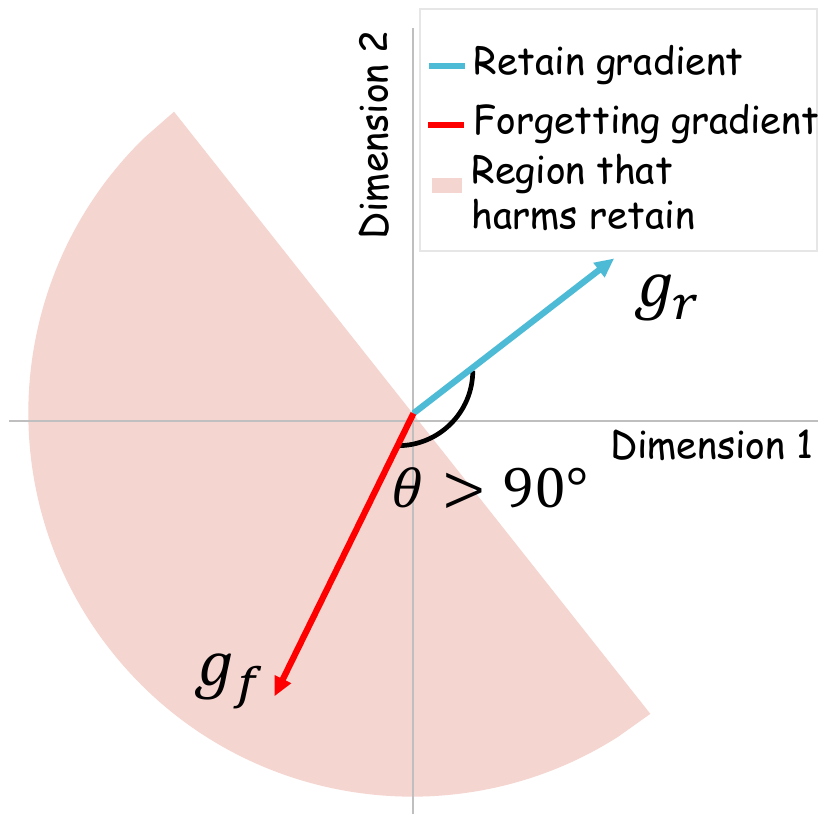}
    \caption{}
    \label{fig1d}
\end{subfigure}
\hfill
\begin{subfigure}[b]{0.18\textwidth}
    \includegraphics[width=\textwidth]{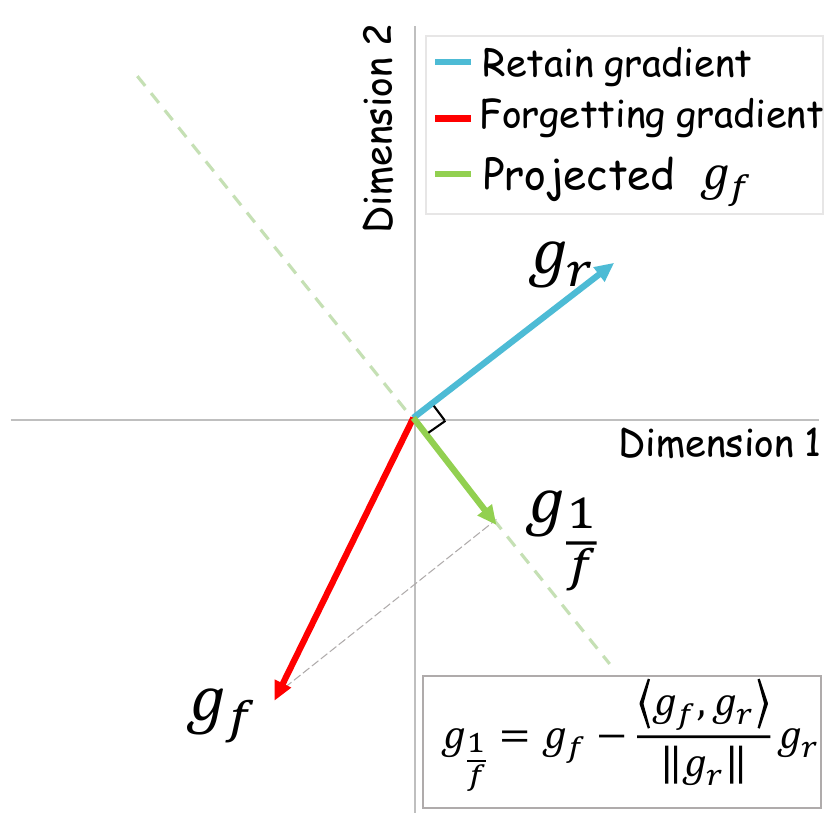}
    \caption{}
    \label{fig1e}
\end{subfigure}
\caption{Motivation of OEU. (a) Similar Labels shifts confidence to a wrong class, while our entropy-guided approach drives predictions away from confident outputs without class-specific bias. (b) Directional conflict between forgetting and retain gradients. (c) Orthogonal projection eliminates this conflict.}
\label{fig11}
\end{figure*}

Despite its pioneering contributions, Q-MUL~\cite{tong2025robust} exhibits two fundamental limitations that hinder its unlearning effectiveness. \textit{First, the Similar Labels strategy still adheres to the paradigm of ``learning incorrect answers.''} As illustrated in Figure~\ref{fig1c}, the original model correctly assigns high confidence to the true label (\textit{class} $0$). However, the Similar Labels approach simply shifts this confidence to an alternative wrong label (\textit{class} $1$), essentially training the model to memorize this mislabel. This conflates forgetting'' with ``remembering incorrectly,'' potentially introducing systematic biases toward specific classes and ultimately failing to achieve genuine data influence removal.

\textit{Second, Adaptive Gradient Reweighting merely adjusts the scalar magnitudes of gradients while entirely neglecting their directional conflicts.} As depicted in Figure~\ref{fig1d}, when the angle between the forgetting gradient and retain gradient exceeds $90^{\circ}$
, any update along the forgetting direction inevitably harms retain performance---no scalar weighting scheme can
ever prevent this. This directional conflict is particularly pronounced in QNNs, where the constrained parameter space amplifies gradient interference.

These limitations motivate us to fundamentally rethink the challenges of unlearning in QNNs:

\begin{tcolorbox}[width=1.0\linewidth, colframe=blackish, colback=grayish, boxsep=0mm, arc=2mm, left=2mm, right=2mm, top=4mm, bottom=2mm]
\vspace{-0.1cm}
\textbf{Challenges:} 
\vspace{-0.1cm}
\begin{itemize}[leftmargin=*]
    \item \textbf{C1: What should be the unlearning objective?} Existing methods train models to memorize incorrect labels, conflating ``forgetting'' with ``misremembering'' and introducing systematic biases.
    \vspace{-0.1cm}
    \item \textbf{C2: How to optimize without harming retain performance?} Scalar gradient reweighting cannot resolve directional conflicts between forgetting and retain gradients, especially in constrained quantized parameter spaces.
\end{itemize}
\end{tcolorbox}

To address these challenges, we propose OEU, a novel Orthogonal Entropy Unlearning framework with two innovations. For \textbf{C1}, we reconceptualize the unlearning objective through an entropy-guided perspective: instead of training the model to produce incorrect predictions, we maximize output entropy to provide an unbiased forgetting direction, guiding predictions away from confident outputs without introducing class-specific biases, as shown in  Figure~\ref{fig1c}. For \textbf{C2}, we propose gradient orthogonal projection to resolve directional conflicts. As illustrated in Figure~\ref{fig1e}, by projecting the forgetting gradient onto the orthogonal complement of the retain gradient, we mathematically guarantee that unlearning updates do not interfere with retained knowledge. Together, maximum entropy defines what to forget, and orthogonal projection ensures how to forget without collateral damage. We summarize our contributions below:
\begin{itemize}[leftmargin=*]
\item  We challenge the prevailing ``learn incorrect labels'' approach and propose entropy-guided unlearning (EGU), , which provides an unbiased forgetting direction by maximizing prediction uncertainty on forgotten data.
\item  We introduce gradient orthogonal projection (GOP) to eliminate directional interference between forgetting and retain gradients, providing theoretical guarantees for utility preservation under first-order approximation. 
\item Extensive evaluations across multiple datasets and quantization configurations demonstrate that OEU consistently outperforms existing methods in both forgetting effectiveness and retain accuracy.

\end{itemize}

\section{Related Work}

\textbf{Model Quantization} has emerged as a fundamental technique for deploying deep neural networks on resource-constrained devices by reducing the bit-width of weights and activations. Existing methods include Post-Training Quantization (PTQ)~\cite{nagel2020up, cai2020zeroq,zhang2024magr}, which directly quantizes pre-trained models but often suffers accuracy degradation at low bit-widths, and Quantization-Aware Training (QAT), which simulates quantization during training to better adapt to quantization-induced errors. QAT relies on the Straight-Through Estimator (STE)~\cite{bengio2013estimating} for gradient backpropagation through non-differentiable operations, with subsequent advances including learnable clipping thresholds~\cite{choi2018pact}, differentiable soft quantization~\cite{gong2019differentiable}, and learnable step sizes~\cite{esser2019learned, bhalgat2020lsq+,li2024bi}. These techniques have enabled widespread deployment of QNNs on edge devices~\cite{bruschi2020enabling,shen2024agile,tong2026enhancing}. However, as user data becomes increasingly entangled with model parameters, significant privacy concerns arise---a challenge that existing quantization literature has largely overlooked. Our work addresses this gap by investigating machine unlearning specifically designed for QNNs.

\textbf{Machine Unlearning} aims to remove the influence of specific training data from learned models, motivated by privacy regulations such as GDPR~\cite{hoofnagle2019european} that grant individuals the ``right to be forgotten.'' The gold standard is retraining from scratch on retained data, but this remains computationally prohibitive for large-scale models. Approximate methods have been developed to efficiently modify model parameters, including influence function-based approaches~\cite{koh2017understanding, izzo2021approximate}, gradient ascent methods~\cite{graves2021amnesiac, thudi2022unrolling}, and label manipulation techniques such as Random Labels~\cite{golatkar2020eternal}, $\ell_1$-sparse~\cite{jia2023model}, and SalUn~\cite{fan2024salun}. However, these methods are designed for full-precision models and operate under a ``learning wrong answers'' paradigm that may introduce systematic biases. Q-MUL~\cite{tong2025robust} is the first work addressing unlearning for QNNs, but it inherits the same paradigm and only adjusts gradient magnitudes without resolving directional conflicts. Our work departs from this paradigm by proposing EGU for genuine uncertainty, complemented by GOP to eliminate directional conflicts. 


\section{Preliminaries}

In this section, we establish the foundational concepts for our work by  revisiting machine unlearning and Quantization-Aware Training (QAT). As MU methods typically require gradient-based optimization, unlearning in QNNs can be naturally formulated as a QAT process.

\subsection{Revisiting Machine Unlearning}

Let $D = \{(x_i, y_i)\}_{i=1}^{N}$ denote the complete training dataset with $N$ samples, where $x_i \in \mathbb{R}^d$ represents the $i$-th input sample and $y_i \in \{1, 2, \ldots, K\}$ is the corresponding class label. The forgotten dataset $D_f \subseteq D$ represents the subset of data that needs to be removed from the trained model, while the retained dataset $D_r$ denotes its complement, satisfying $D_f \cap D_r = \varnothing$ and $D_f \cup D_r = D$. Based on the composition of $D_f$, machine unlearning in image classification can be categorized into two scenarios: class-wise forgetting, where $D_f$ consists entirely of samples from specific classes to eliminate, and random data forgetting, where $D_f$ contains randomly selected samples from one or multiple classes.

We denote the original model trained on $D$ as $\mathcal{M}_0$ with parameters $\theta_0$. The retrained model $\mathcal{M}_r$, obtained by training from scratch solely on the retained dataset $D_r$, serves as the ``gold standard'' for unlearning evaluation~\cite{nguyen2025survey}. However, retraining incurs substantial computational overhead, particularly for large-scale models. Approximate unlearning methods aim to obtain an unlearned model $\mathcal{M}_u$ by efficiently modifying the original model $\mathcal{M}_0$ to eliminate the influence of $D_f$, such that $\mathcal{M}_u$ approximates $\mathcal{M}_r$ in terms of both prediction behavior and privacy guarantees. The effectiveness of unlearning is typically evaluated through multiple metrics: Forget Accuracy (FA) measuring performance on $D_f$, Retain Accuracy (RA) on $D_r$, Test Accuracy (TA) on held-out test data, and Membership Inference Attack (MIA) success rate assessing privacy leakage~\cite{jia2023model, fan2024salun, tong2025robust}.

\subsection{Revisiting Quantization-Aware Training}

Quantization-Aware Training (QAT) simulates quantization effects during training by inserting fake quantization nodes that perform quantization and dequantization operations on floating-point values. For $n$-bit signed quantization with scale factor $s$, the quantization function $q(\cdot)$ is defined as:
\begin{equation}
    x^q = q(x^r) = s \times \left\lfloor \text{clamp}\left(\frac{x^r}{s}, -2^{n-1}, 2^{n-1} - 1\right) \right\rceil
\end{equation}
where $\lfloor \cdot \rceil$ denotes rounding to the nearest integer, and $\text{clamp}(x, r_{\text{low}}, r_{\text{high}})$ constrains $x$ within the range $[r_{\text{low}}, r_{\text{high}}]$.

In neural networks with quantized weights and activations, the forward pass computes outputs using quantized values:
\begin{equation}
    \text{Forward:} \quad \text{Output}(x) = x^q \cdot w^q = q(x^r) \cdot q(w^r)
\end{equation}
Since the quantization function is non-differentiable, the Straight-Through Estimator (STE)~\cite{bengio2013estimating} is employed for gradient computation during backpropagation:
\begin{equation}
    \text{Backward:} \quad \frac{\partial \mathcal{L}}{\partial x^r} = \begin{cases} \frac{\partial \mathcal{L}}{\partial x^q} & \text{if } x \in [-Q_N^x, Q_P^x] \\ 0 & \text{otherwise} \end{cases}
\end{equation}
where $\mathcal{L}$ denotes the loss function, and $[-Q_N^x, Q_P^x]$ represents the quantization range. The gradient with respect to weights follows an analogous formulation.

For image classification tasks, the quantized model maps input images to predicted probability distributions through the forward function $f(x; w^q)$. Given a sample $(x_i, y_i)$ from the dataset $D$, the quantized model's output probability distribution is:
\begin{equation}
    p_{\mathbf{Q}}(x_i; w^q) = \text{softmax}(f(x_i^q; w^q))
\end{equation}
For notational simplicity, we hereafter use $x$ to denote the quantized input and $\theta$ to represent the quantized model parameters unless otherwise specified.

The discrete nature of quantized parameters further amplifies the challenges identified above. First, the constrained parameter space (limited to $2^n$ possible values per weight) magnifies the systematic biases introduced by the ``learning wrong answers'' paradigm, as QNNs have limited capacity to absorb label noise. Second, the STE-based gradient approximation exacerbates directional conflicts between forgetting and retain gradients, making conflict-free optimization even more critical. These challenges motivate the design of our proposed OEU framework.

\begin{figure*}[t]
\centering
\includegraphics[width=1\textwidth]{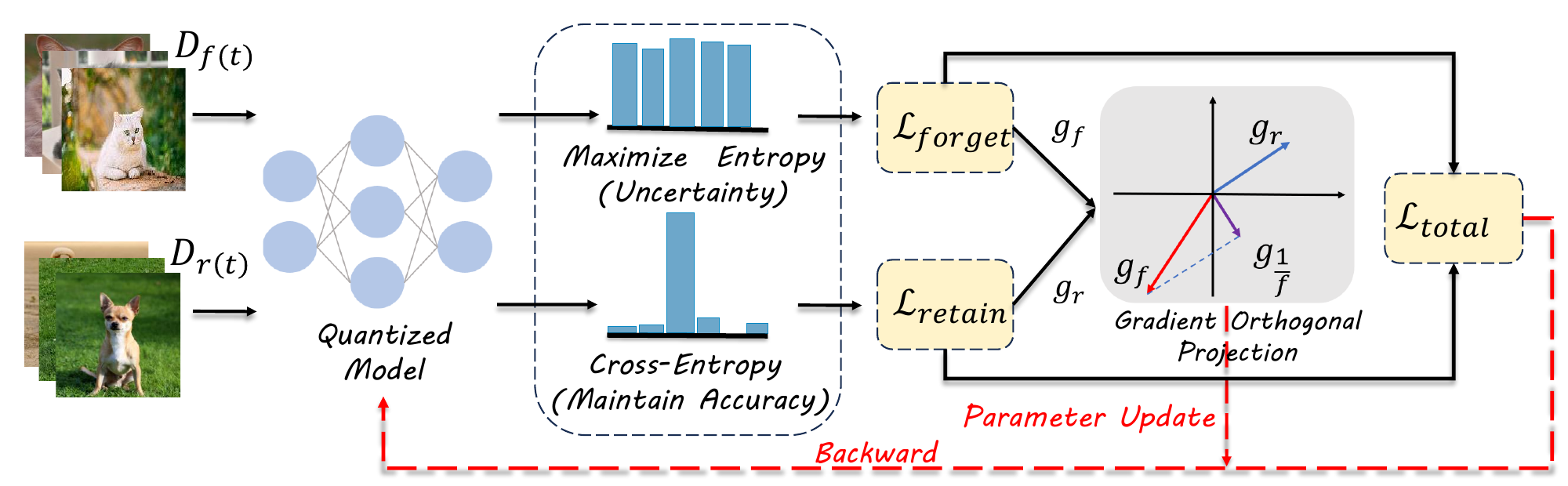}
\caption{Overview of the proposed OEU framework. The forget set is trained with entropy maximization to achieve uncertainty, while the retain set uses cross-entropy to maintain accuracy. Gradient orthogonal projection resolves conflicts between the two gradients before parameter update.}
\label{fig1}
\end{figure*}

\section{Methodology}
\label{sec:method}

In this section, we provide a comprehensive introduction to the proposed \textbf{OEU (Orthogonal Entropy Unlearning)} framework. We first introduce the overview of OEU. Subsequently, we provide a detailed description of two core components of OEU: Entropy-Guided Unlearning (Section~\ref{sec:entropy}) and Gradient Orthogonal Projection (Section~\ref{sec:orthogonal}).

\subsection{The Overview of OEU}
\label{sec:overview}

In Figure~\ref{fig1}, we present an overview of OEU, which consists of two parallel branches for processing the forget set and the retain set, followed by a gradient reconciliation module. Given a forget set $D_f$ and a retain set $D_r$, both are fed into the quantized model to obtain predictions. For the forget set, we apply entropy maximization to guide predictions away from confident outputs without bias toward any particular class, providing an unbiased forgetting direction rather than learning incorrect answers. For the retain set, we employ the standard cross-entropy loss to maintain prediction accuracy on retained data. The two objectives produce the forgetting gradient $g_f$ and the retain gradient $g_r$, respectively. To resolve potential directional conflicts between these gradients, we introduce a gradient orthogonal projection module that projects $g_f$ onto the orthogonal complement of $g_r$, ensuring that unlearning updates do not interfere with retained knowledge. The projected gradients are then combined to compute the total loss for parameter update. In the following two subsections, we will provide a detailed introduction to the two core components of OEU.

\subsection{Entropy-Guided Unlearning}
\label{sec:entropy}

Existing unlearning methods, including Q-MUL's Similar Labels, operate under a ``learning wrong answers'' paradigm—they assign incorrect labels to forgotten samples and train the model to memorize these mislabels. We argue that this approach conflates forgetting with misremembering, potentially introducing systematic biases. Instead, we propose \textbf{Entropy-Guided Unlearning (EGU)} to render the model genuinely uncertain about forgotten data.

For a quantized model $f_\theta^Q$ with parameters $\theta$, given an input $x$, the output probability distribution is:
\begin{equation}
    p_\theta(k|x) = \frac{\exp(z_k)}{\sum_{j=1}^{K} \exp(z_j)}, \quad k \in \{1, 2, \ldots, K\},
\end{equation}
where $z = f_\theta^Q(x)$ denotes the logits and $K$ is the number of classes. The entropy of this distribution measures the model's uncertainty:
\begin{equation}
    H(p_\theta(x)) = -\sum_{k=1}^{K} p_\theta(k|x) \log p_\theta(k|x),
\end{equation}
The entropy is maximized when $p_\theta(k|x) = \frac{1}{K}$ for all $k$, yielding $H_{\max} = \log K$. This uniform distribution represents the theoretical optimum of $\mathcal{L}_{\text{forget}}$ in isolation. We provide a visualization of this effect in Appendix~\ref{appendix:entropy_viz}.

We define the \textbf{entropy-guided forgetting loss} as the negative entropy over the forgotten dataset $D_f$:
\begin{equation}
\label{eq:forget_loss}
\begin{split}
    \mathcal{L}_{\text{forget}} &= -\mathbb{E}_{x \in D_f}\left[H(p_\theta(x))\right] \\
    &= \mathbb{E}_{x \in D_f}\left[\sum_{k=1}^{K} p_\theta(k|x) \log p_\theta(k|x)\right].
\end{split}
\end{equation}
Minimizing $\mathcal{L}_{\text{forget}}$ maximizes the entropy, driving the model toward maximum uncertainty on forgotten samples. To preserve performance on retained data $D_r$, we employ the standard cross-entropy retain loss $\mathcal{L}_{\text{retain}} = \mathbb{E}_{(x,y) \in D_r}\left[-\log p_\theta(y|x)\right]$.

\paragraph{Theoretical Analysis of EGU.}
We provide theoretical justifications for EGU by comparing it with label manipulation methods. Let $\mathcal{U}(k) = \frac{1}{K}$ denote the uniform distribution over $K$ classes, representing the ideal ``forgotten'' state where the model has no preference for any class.

\begin{theorem}[Bias of Label Manipulation]
\label{thm:label_bias}
Label manipulation methods (Random Labels, Similar Labels) minimize cross-entropy with incorrect labels $\tilde{y} \neq y$. The optimal prediction distribution is $p^*(k|x) = \mathbf{1}[k = \tilde{y}]$, yielding $H(p^*) = 0$ and $D_{KL}(p^* \| \mathcal{U}) = \log K$.
\end{theorem}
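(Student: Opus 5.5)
The argument works at the level of the prediction distribution: we treat the model output $p(\cdot|x)$ for a fixed forgotten sample $x$ as a free point of the probability simplex $\Delta_K$. The label-manipulation objective for that sample is then $\ell(p) = -\log p(\tilde{y}|x)$, and we identify its minimizer over $\Delta_K$. This idealization matches the statement's phrase ``optimal prediction distribution.'' If the network is expressive enough, it can realize, or approach as a limit, any point of the simplex at a single input. When the output is a softmax of finite logits, the optimum is only approached as $z_{\tilde{y}} - z_j \to \infty$. I would state explicitly that $p^*$ is understood as the infimum-attaining limit point.

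\textbf{Step 1 (optimizer).} The function $-\log t$ is strictly decreasing on $(0,1]$. Hence $\ell(p)$ is minimized exactly when $p(\tilde{y}|x)$ is as large as possible, namely equal to $1$. The simplex constraint $\sum_k p(k|x)=1$ with $p\ge 0$ then forces $p(k|x)=0$ for every $k\neq\tilde{y}$. So $p^*=\mathbf{1}[k=\tilde{y}]$ is the unique minimizer, with $\ell(p^*)=0$.

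For the softmax parametrization, $-\log p_\theta(\tilde{y}|x) = \log\bigl(1+\sum_{j\neq\tilde{y}} e^{z_j - z_{\tilde{y}}}\bigr)$. This quantity is strictly positive and tends to $0$ only along sequences with $z_{\tilde{y}} - z_j \to\infty$ for all $j\neq\tilde{y}$. Along any such sequence the softmax output converges to $\mathbf{1}[k=\tilde{y}]$. This confirms that the infimum is achieved only at $p^*$.

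\textbf{Step 2 (entropy).} Under the convention $0\log 0 = 0$, which is justified by the limit $t\log t\to 0$ as $t\to 0^+$, we get $H(p^*) = -1\cdot\log 1 - \sum_{k\neq\tilde{y}} 0\log 0 = 0$.

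\textbf{Step 3 (KL divergence).} We have $D_{KL}(p^*\|\mathcal{U}) = \sum_k p^*(k)\log\frac{p^*(k)}{1/K}$. Expanding gives the general identity $D_{KL}(p\|\mathcal{U}) = \log K - H(p)$. Substituting $H(p^*)=0$ yields $\log K$.

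It is worth adding a remark on what this means. Since $H(p)\le \log K$, the quantity $\log K$ is the largest possible divergence from uniform. So $p^*$ is maximally far from the ideal forgotten state, which is the ``bias'' the theorem asserts.

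\textbf{Main obstacle.} The computations themselves are routine. The only delicate point is making precise the sense in which $p^*$ is ``optimal,'' because softmax never attains a one-hot vector. I would handle this with the limiting argument in Step 1 and by continuity of $H$ and of $D_{KL}(\cdot\|\mathcal{U})$ on the simplex. Continuity gives $H(p_\theta)\to 0$ and $D_{KL}(p_\theta\|\mathcal{U})\to\log K$ along any minimizing sequence. The same reasoning covers both Random Labels and Similar Labels, since the argument depends only on the fact that some fixed $\tilde{y}$ is the target, not on how $\tilde{y}$ is chosen.
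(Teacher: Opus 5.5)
Your proposal is correct and follows essentially the same route as the paper: it minimizes $-\log p(\tilde{y}|x)$ over the simplex to get the one-hot $p^*$, then computes $H(p^*)=0$ and $D_{KL}(p^*\|\mathcal{U})=\log K$ (you via the identity $D_{KL}(p\|\mathcal{U})=\log K - H(p)$, the paper by direct evaluation). Your limiting argument for the softmax parametrization, which never attains a one-hot output, is a real tightening that the paper's proof skips without comment.
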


\begin{theorem}[Unbiasedness of Maximum Entropy]
\label{thm:entropy_unbiased}
EGU maximizes $H(p_\theta(\cdot|x))$. The optimal prediction distribution is $p^*(k|x) = \frac{1}{K}$ for all $k$, yielding $H(p^*) = \log K$ and $D_{KL}(p^* \| \mathcal{U}) = 0$.
\end{theorem}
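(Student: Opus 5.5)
The plan is to treat the statement as a claim about maximizing a concave function on the probability simplex. Fix an input $x$ and regard the prediction $p=p_\theta(\cdot|x)$ as a point of $\Delta_K=\{p\in\mathbb{R}^K: p_k\ge 0,\ \sum_k p_k=1\}$. Under the idealization used in Theorem~\ref{thm:label_bias}, where the model is expressive enough to realize any (limit of) softmax outputs, the EGU objective in \eqref{eq:forget_loss} decouples across samples. It therefore suffices to maximize $H(p)=-\sum_k p_k\log p_k$ pointwise over $\Delta_K$.

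The cleanest route to the maximizer is to avoid Lagrange multipliers and use the KL identity directly. For any $p\in\Delta_K$,
\begin{equation*}
D_{KL}(p\,\|\,\mathcal{U})=\sum_{k=1}^K p_k\log\frac{p_k}{1/K}=\log K - H(p).
\end{equation*}
By Gibbs' inequality, $D_{KL}(p\|\mathcal{U})\ge 0$, with equality if and only if $p=\mathcal{U}$. This gives $H(p)\le\log K$, and the bound is attained exactly at $p_k=1/K$. The same identity delivers both remaining claims at once: $H(p^*)=\log K$ and $D_{KL}(p^*\|\mathcal{U})=0$.

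To keep the argument self-contained, I would justify Gibbs' inequality via Jensen applied to the concave $\log$:
\begin{equation*}
-D_{KL}(p\|\mathcal{U})=\sum_{k:p_k>0}p_k\log\frac{1/K}{p_k}\le\log\sum_{k:p_k>0}\frac1K\le 0.
\end{equation*}
Strict concavity forces equality only when the ratio $(1/K)/p_k$ is constant on the support and the support is all of $\{1,\dots,K\}$, which means $p=\mathcal{U}$. As an alternative check, stationarity of $H$ under the constraint $\sum_k p_k=1$ gives $-\log p_k-1=\lambda$, so all $p_k$ are equal. Strict concavity of $H$ then makes this stationary point the unique global maximizer.

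The main obstacle is not the algebra but a modeling subtlety: softmax outputs lie in the open simplex. The uniform point is interior, so it is genuinely attained, for example when all logits are equal. This stands in contrast to Theorem~\ref{thm:label_bias}, where the one-hot optimum is only approached in the limit. I would state explicitly that the optimum is over realizable distributions, and that the pointwise decoupling assumes enough model capacity; without that capacity, the statement describes the optimum of $\mathcal{L}_{\text{forget}}$ in isolation, as the paper already phrases it.
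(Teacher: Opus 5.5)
Your proof is correct, but it takes a different route from the paper. The paper writes the Lagrangian $-\sum_k p_k\log p_k+\lambda(\sum_k p_k-1)$ and solves $-\log p_k-1+\lambda=0$ to conclude that all $p_k$ are equal, hence $p_k=1/K$. It then gets $H(p^*)=\log K$ and $D_{KL}(p^*\|\mathcal{U})=0$ by two separate direct substitutions. That argument only locates a stationary point. It never shows the point is a maximum, since there is no concavity or second-order check. It also states the constraints $p_k\ge 0$ but never uses them, so the boundary of the simplex, where the stationarity equation is not even defined, is not ruled out. Your identity $D_{KL}(p\|\mathcal{U})=\log K-H(p)$, combined with Gibbs' inequality, handles all of this at once. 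It gives the global bound $H(p)\le\log K$ on the closed simplex. The equality case of Jensen gives uniqueness of the maximizer, and your support-counting step correctly disposes of boundary points. It also makes $D_{KL}(p^*\|\mathcal{U})=0$ an immediate corollary rather than an extra computation. Beyond that, it shows that maximizing entropy and minimizing KL divergence to $\mathcal{U}$ are literally the same problem, which is the cleanest formalization of the ``unbiasedness'' the theorem advertises. What the paper's route offers is a constructive, textbook derivation of the candidate optimum. Your fallback remark, that strict concavity of $H$ upgrades the stationary point to the unique global maximizer, is exactly the step needed to make the paper's version rigorous. Your realizability comments go beyond what the paper discusses and are sound: the uniform point is attained by equal logits, and per-sample decoupling presumes sufficient model capacity.
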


Theorem~\ref{thm:label_bias} shows that label manipulation methods train the model to confidently predict incorrect labels, replacing correct memorization with incorrect memorization. This introduces systematic bias that may be exploited by membership inference attacks. In contrast, Theorem~\ref{thm:entropy_unbiased} shows that EGU provides an unbiased optimization direction toward higher entropy, avoiding class-specific biases inherent in label manipulation methods. Detailed proofs are provided in Appendix~\ref{sec:entropy_proofs}.




\subsection{Gradient Orthogonal Projection}
\label{sec:orthogonal}

While EGU defines an appropriate forgetting objective, naively optimizing it may conflict with the retain objective. Let $g_f = \nabla_\theta \mathcal{L}_{\text{forget}}$ and $g_r = \nabla_\theta \mathcal{L}_{\text{retain}}$ denote the gradients from forgetting and retain losses. When their cosine similarity $\cos(\theta_{fr}) = \frac{\langle g_f, g_r \rangle}{\|g_f\| \cdot \|g_r\|} < 0$, any update along $g_f$ will increase $\mathcal{L}_{\text{retain}}$. This conflict is particularly severe in quantized models, where the constrained parameter space amplifies gradient interference. Q-MUL's Adaptive Gradient Reweighting adjusts gradient magnitudes, but scalar weighting cannot resolve \emph{directional} conflicts.

To eliminate such conflicts, we propose \textbf{Gradient Orthogonal Projection (GOP)}, which projects the forgetting gradient onto the orthogonal complement of the retain gradient.
\begin{equation}
    g_f^{\perp} = g_f - \frac{\langle g_f, g_r \rangle}{\|g_r\|^2} g_r,
\end{equation}
By construction, $\langle g_f^{\perp}, g_r \rangle = 0$, ensuring that updates along $g_f^{\perp}$ do not affect the retain loss under first-order approximation.

\paragraph{Layer-wise Normalized Projection.}
In quantized models, gradients propagated through the Straight-Through Estimator (STE) contain approximation errors that vary across layers. We apply layer-wise normalization before projection:
\begin{equation}
\label{eq:layer_norm}
    \tilde{g}_f^{(l)} = \frac{g_f^{(l)}}{\|g_f^{(l)}\| + \epsilon}, \quad \tilde{g}_r^{(l)} = \frac{g_r^{(l)}}{\|g_r^{(l)}\| + \epsilon},
\end{equation}
where $l$ indexes the layer and $\epsilon$ ensures numerical stability. The layer-wise orthogonal projection is:
\begin{equation}
\label{eq:layer_proj}
    g_f^{\perp(l)} = \tilde{g}_f^{(l)} - \langle \tilde{g}_f^{(l)}, \tilde{g}_r^{(l)} \rangle \cdot \tilde{g}_r^{(l)}, \quad \hat{g}_f^{(l)} = g_f^{\perp(l)} \cdot \|g_f^{(l)}\|.
\end{equation}

We also introduce a hyperparameter $\alpha \in [0, 1]$ to control the degree of orthogonalization: $g_f^{(\alpha)} = g_f - \alpha \cdot \frac{\langle g_f, g_r \rangle}{\|g_r\|^2} g_r$. When $\alpha = 0$, no projection is applied; when $\alpha = 1$, full orthogonalization is enforced, allowing practitioners to trade off between forgetting aggressiveness and retain protection.

\paragraph{Theoretical Analysis of GOP.}
We provide theoretical justifications for GOP under first-order approximation. We assume $g_r \neq \mathbf{0}$, which holds when the retain set is non-trivial. The following theorem establishes that GOP guarantees conflict-free optimization.

\begin{theorem}[Retain Preservation]
\label{thm:retain_main}
Let $g_f^{\perp} = g_f - \frac{\langle g_f, g_r \rangle}{\|g_r\|^2} g_r$ be the projected forgetting gradient. For parameter update $\theta' = \theta - \eta g_f^{\perp}$, we have $\langle g_f^{\perp}, g_r \rangle = 0$, which implies $\mathcal{L}_r(\theta') \approx \mathcal{L}_r(\theta)$ under first-order approximation.
\end{theorem}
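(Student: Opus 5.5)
The plan has two parts: an exact algebraic identity, then a first-order Taylor expansion of the retain loss along the update direction. Throughout, $\mathcal{L}_r$ denotes $\mathcal{L}_{\text{retain}}$, $g_r=\nabla_\theta\mathcal{L}_r(\theta)$, and we use the standing assumption $g_r\neq\mathbf{0}$, so the projection coefficient $\langle g_f,g_r\rangle/\|g_r\|^2$ is well defined.

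\textbf{Orthogonality.} By bilinearity of the inner product,
\begin{equation*}
\langle g_f^{\perp}, g_r\rangle = \langle g_f,g_r\rangle - \frac{\langle g_f,g_r\rangle}{\|g_r\|^2}\langle g_r,g_r\rangle = \langle g_f,g_r\rangle-\langle g_f,g_r\rangle = 0 .
\end{equation*}
This is exact and needs no approximation.

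\textbf{First-order preservation.} Assume $\mathcal{L}_r$ is differentiable at $\theta$, which is the regime in which the paper works with STE surrogate gradients. A Taylor expansion at $\theta$ along the displacement $\theta'-\theta=-\eta g_f^{\perp}$ gives
\begin{equation*}
\mathcal{L}_r(\theta') = \mathcal{L}_r(\theta) - \eta\langle g_r, g_f^{\perp}\rangle + o(\eta\|g_f^{\perp}\|).
\end{equation*}
The linear term vanishes by the orthogonality step, so $\mathcal{L}_r(\theta')=\mathcal{L}_r(\theta)+o(\eta)$, which is the claimed first-order approximation.

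If $\mathcal{L}_r$ is twice differentiable with Hessian bounded by $L$ near $\theta$ (i.e.\ $\nabla\mathcal{L}_r$ is $L$-Lipschitz), the remainder is controlled explicitly:
\begin{equation*}
|\mathcal{L}_r(\theta')-\mathcal{L}_r(\theta)|\le \tfrac{L}{2}\eta^2\|g_f^{\perp}\|^2 .
\end{equation*}
For contrast, the unprojected update $-\eta g_f$ gives a first-order change of $-\eta\langle g_f,g_r\rangle$, which is strictly positive exactly in the conflicting case $\langle g_f,g_r\rangle<0$. GOP removes this term.

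\textbf{Main obstacle.} The only delicate point is interpreting ``$\approx$'' rigorously. Two issues arise:
\begin{itemize}
\item In the quantized setting, $\mathcal{L}_r$ is piecewise constant in the latent weights, so the true gradient is not $g_r$. The statement should therefore be read as applying to the STE-smoothed surrogate loss, or to the full-precision latent loss, with $g_r$ its gradient.
\item One must state the result as $o(\eta)$ rather than exact equality.
\end{itemize}
I would add one remark on each point.

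I would also note that the layer-wise variant \eqref{eq:layer_proj} yields $\langle \hat g_f^{(l)}, g_r^{(l)}\rangle=0$ for each layer $l$, up to the $\epsilon$ regularization: by the same identity, $\tilde g_f^{(l)}$ minus its projection is orthogonal to $\tilde g_r^{(l)}$ when $\|\tilde g_r^{(l)}\|=1$. Summing over layers gives global orthogonality, so the same Taylor argument applies to the layer-wise update.
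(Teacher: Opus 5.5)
Your proof is correct and follows the paper's own argument: first the orthogonality identity $\langle g_f^{\perp}, g_r\rangle = 0$ by bilinearity (the paper's Orthogonality lemma), then a first-order Taylor expansion of $\mathcal{L}_r$ at $\theta$ in which the linear term $-\eta\langle g_r, g_f^{\perp}\rangle$ vanishes. Your additions are correct refinements that the paper does not make: the explicit $o(\eta)$ remainder and the $\tfrac{L}{2}\eta^2\|g_f^{\perp}\|^2$ bound, the caveat that the argument applies to the STE surrogate rather than the piecewise-constant quantized loss, and the check that the layer-wise variant is orthogonal to $g_r$ (exactly so only when $\epsilon=0$).
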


\begin{theorem}[Forgetting Effectiveness]
\label{thm:forget_main}
If $g_f$ is not collinear with $g_r$, then $g_f^{\perp} \neq \mathbf{0}$ and $\langle g_f, g_f^{\perp} \rangle = \|g_f\|^2 \sin^2\phi > 0$, where $\phi$ is the angle between $g_f$ and $g_r$. This guarantees $\mathcal{L}_f(\theta') < \mathcal{L}_f(\theta)$.
\end{theorem}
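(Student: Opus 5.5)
We prove the statement by direct computation with the projection formula, and then apply a first-order Taylor expansion of $\mathcal{L}_f$ around $\theta$. Throughout we use the standing assumption $g_r \neq \mathbf{0}$, so the projection is well defined. Write $c = \langle g_f, g_r\rangle / \|g_r\|^2$, so that $g_f^{\perp} = g_f - c\, g_r$.

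\textbf{Step 1: nonvanishing.} Suppose $g_f^{\perp} = \mathbf{0}$. Then $g_f = c\, g_r$, so $g_f$ is collinear with $g_r$, which contradicts the hypothesis. Hence $g_f^{\perp} \neq \mathbf{0}$. The hypothesis also forces $g_f \neq \mathbf{0}$, since the zero vector is collinear with everything. Therefore $\|g_f\|>0$ and the angle $\phi$ is well defined, with $\cos\phi = \langle g_f,g_r\rangle/(\|g_f\|\|g_r\|)$.

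\textbf{Step 2: the inner product identity.} Expanding the projection gives
\begin{equation*}
\langle g_f, g_f^{\perp}\rangle = \|g_f\|^2 - \frac{\langle g_f,g_r\rangle^2}{\|g_r\|^2} = \|g_f\|^2\bigl(1-\cos^2\phi\bigr) = \|g_f\|^2\sin^2\phi .
\end{equation*}
Strict positivity follows in either of two ways. Directly, non-collinearity means $|\cos\phi|<1$, so $\sin^2\phi>0$. Alternatively, we can use the Pythagorean decomposition $g_f = g_f^{\perp} + c\,g_r$. Since $\langle g_f^{\perp}, g_r\rangle = 0$ (Theorem~\ref{thm:retain_main}), this gives $\langle g_f, g_f^{\perp}\rangle = \|g_f^{\perp}\|^2 > 0$ by Step 1. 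The two expressions agree, which serves as a consistency check.

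\textbf{Step 3: decrease of $\mathcal{L}_f$.} Take the update $\theta' = \theta - \eta g_f^{\perp}$ with $\eta>0$. A Taylor expansion gives
\begin{equation*}
\mathcal{L}_f(\theta') = \mathcal{L}_f(\theta) - \eta\,\langle g_f, g_f^{\perp}\rangle + o(\eta) = \mathcal{L}_f(\theta) - \eta\,\|g_f\|^2\sin^2\phi + o(\eta).
\end{equation*}
Because the first-order coefficient is strictly negative, we get $\mathcal{L}_f(\theta')<\mathcal{L}_f(\theta)$ for all sufficiently small $\eta>0$. Equivalently, $-g_f^{\perp}$ is a strict descent direction.

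\textbf{Main obstacle.} The only delicate point is making ``guarantees'' rigorous. The statement holds under first-order approximation, in the same sense as Theorem~\ref{thm:retain_main}, or for small enough step size when $\mathcal{L}_f$ is differentiable at $\theta$. If a quantitative bound is wanted, we would assume $\mathcal{L}_f$ is $L$-smooth. The descent lemma then gives
\begin{equation*}
\mathcal{L}_f(\theta')\le \mathcal{L}_f(\theta)-\eta\|g_f^{\perp}\|^2+\tfrac{L\eta^2}{2}\|g_f^{\perp}\|^2,
\end{equation*}
using $\langle g_f,g_f^{\perp}\rangle=\|g_f^{\perp}\|^2$ from Step 2. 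This yields strict decrease for $0<\eta<2/L$.

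In the quantized setting, $g_f$ is the STE surrogate gradient, so the claim applies to the surrogate (latent full-precision) objective. We would state this caveat explicitly rather than attempt to prove anything about the discrete loss.
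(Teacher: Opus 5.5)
Your proposal is correct and follows the paper's proof essentially step for step. Both argue by contradiction that $g_f^{\perp}\neq\mathbf{0}$, expand $\langle g_f, g_f^{\perp}\rangle$ through the cosine to obtain $\|g_f\|^2\sin^2\phi$, and conclude with a first-order Taylor expansion. Your additions make precise what the paper leaves at the level of ``$\approx$'': deriving $g_f\neq\mathbf{0}$ from non-collinearity (the paper simply asserts it), the identity $\langle g_f, g_f^{\perp}\rangle=\|g_f^{\perp}\|^2$, and the small-step or $L$-smooth qualification on $\eta$.
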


Theorem~\ref{thm:retain_main} shows that the projected gradient is orthogonal to $g_r$, ensuring updates along $g_f^{\perp}$ do not increase retain loss. Theorem~\ref{thm:forget_main} confirms that projection preserves the descent direction for forgetting. Together, they guarantee that GOP achieves forgetting without harming retain. When gradients conflict ($\langle g_f, g_r \rangle < 0$), naive gradient descent increases $\mathcal{L}_r$, while GOP eliminates this interference entirely. Detailed proofs are provided in Appendix~\ref{sec:proofs}.




\subsection{Overall Pipeline}
\label{sec:pipeline}

We now present the complete OEU framework that integrates entropy-guided unlearning with gradient orthogonal projection. The overall optimization objective combines the entropy-based forgetting loss and the cross-entropy retain loss:
\begin{equation}
    \mathcal{L}_{\text{total}} = \mathcal{L}_{\text{forget}} + \beta \mathcal{L}_{\text{retain}},
\end{equation}
where $\beta$ balances the two objectives. However, rather than directly optimizing this combined loss, we compute separate gradients and apply orthogonal projection to ensure conflict-free updates. At each iteration, the model parameters are updated as:
\begin{equation}
    \theta_{t+1} = \theta_t - \eta \left( \hat{g}_f^{\perp} + g_r \right).
\end{equation}
where $\hat{g}_f^{\perp}$ is the orthogonally projected forgetting gradient and $g_r$ is the retain gradient. This update simultaneously drives the model toward uncertainty on forgotten data while preserving performance on retained data. The complete OEU algorithm is presented in Algorithm~\ref{alg:OEU}.

\begin{algorithm}[tb]
\small
\caption{OEU: Orthogonal Entropy Unlearning}
\label{alg:OEU}
\begin{algorithmic}[1]
\REQUIRE Quantized model $\mathcal{M}_Q$ with parameters $\theta_0$, forgotten dataset $D_f$, retained dataset $D_r$, total epochs $T$, learning rate $\eta$, orthogonality strength $\alpha$
\ENSURE Unlearned model $\mathcal{M}_Q'$ with parameters $\theta_u$
\FOR{$t \in [0, \ldots, T-1]$}
    \STATE Compute entropy-guided forgetting loss $\mathcal{L}_{\text{forget}}$ by Eq.~(\ref{eq:forget_loss});
    \STATE Compute forgetting gradient $g_f \leftarrow \nabla_\theta \mathcal{L}_{\text{forget}}$;
    \STATE Compute retain loss $\mathcal{L}_{\text{retain}} \leftarrow \mathbb{E}_{(x,y) \in D_r}\left[\text{CE}(f_\theta(x), y)\right]$;
    \STATE Compute retain gradient $g_r \leftarrow \nabla_\theta \mathcal{L}_{\text{retain}}$;
    \FOR{each layer $l$}
        \STATE Compute layer-wise gradients $\tilde{g}_f^{(l)}$, $\tilde{g}_r^{(l)}$ by Eq.~(\ref{eq:layer_norm});
        \STATE Compute orthogonal projection $\hat{g}_f^{(l)}$ by Eq.~(\ref{eq:layer_proj});
    \ENDFOR
    \STATE Update parameters $\theta_{t+1} \leftarrow \theta_t - \eta \cdot (\hat{g}_f + g_r)$;
\ENDFOR
\STATE \textbf{Return} Unlearned model $\mathcal{M}_Q'$ with parameters $\theta_T$
\end{algorithmic}
\end{algorithm}
\paragraph{Computational Complexity.}
The computational overhead of OEU compared to standard unlearning is minimal. Entropy computation requires $O(K)$ per sample, where $K$ is the number of classes; GOP requires $O(|\theta|)$ per iteration, where $|\theta|$ is the number of parameters. Both operations are linear in their respective dimensions and negligible compared to the forward and backward passes. The overall time complexity remains $O(T \cdot (|D_f| + |D_r|) \cdot C_{\text{model}})$, where $C_{\text{model}}$ is the cost of a single forward-backward pass.We provide a runtime comparison of all methods in Appendix~\ref{efficiency}.
\paragraph{Synergy of Components.}
The two components of OEU are complementary: \emph{EGU} defines \emph{what} to forget—providing an unbiased forgetting direction that avoids class-specific biases, while \emph{GOP} and $\mathcal{L}_{\text{retain}}$ jointly constrain the forgetting degree to prevent over-unlearning; \emph{GOP} defines \emph{how} to forget—ensuring that the forgetting process does not interfere with retained knowledge and providing mathematical guarantees for utility preservation. Together, they form a principled framework for effective and conflict-free machine unlearning in QNNs.

\section{Experiments}

\subsection{Experimental Setup}

\noindent\textbf{Datasets and Networks.} Our experiments are conducted on four benchmark datasets: CIFAR-10~\cite{krizhevsky2009learning}, CIFAR-100~\cite{krizhevsky2009learning}, SVHN~\cite{netzer2011reading}, and Tiny-ImageNet~\cite{le2015tiny}. Consistent with Q-MUL~\cite{tong2025robust}, we adopt ResNet-18~\cite{he2016deep} with 4-bit weights and activations, and MobileNetV2~\cite{howard2017mobilenets} with 2-bit weights and full-precision activations. We evaluate under two unlearning scenarios: random data forgetting with forget ratios of 10\%, 30\%, and 50\%, and class-wise forgetting. We primarily focus on random data forgetting as it better reflects real-world privacy removal requests.

\noindent\textbf{Baselines.} We compare OEU against eight methods. \textbf{Retrain} serves as the gold standard by retraining from scratch on retained data. \textbf{Fine-Tuning (FT)}~\cite{golatkar2020eternal, warnecke2021machine} fine-tunes the original model on retained data. \textbf{Gradient Ascent (GA)}~\cite{graves2021amnesiac, thudi2022unrolling} reverses learning by performing gradient ascent on forgotten data. \textbf{Influence Unlearning (IU)}~\cite{koh2017understanding, izzo2021approximate} removes data influence via Newton-step updates. Label manipulation methods include \textbf{Random Labels (RL)}~\cite{golatkar2020eternal}, \textbf{$\ell_1$-sparse}~\cite{jia2023model} that combines random labeling with weight sparsification, and \textbf{SalUn}~\cite{fan2024salun} that incorporates gradient-based saliency maps. \textbf{Q-MUL}~\cite{tong2025robust} is the state-of-the-art method specifically designed for QNNs.

\begin{table*}[!t]
 \caption{Performance of various MU methods for ResNet18 with both activations and weights quantized to 4 bits
CIFAR-10 and CIFAR-100. The unlearning scenario is random data forgetting. \textbf{Bold} indicates the best performance and \underline{underline} indicates the runner-up. A performance gap against Retrain is provided in\textcolor{gray}{(•)}.}
  \centering
\scriptsize
    \begin{tabular}{c|ccccc|ccccc}
\toprule    
\multirow{2}[4]{*}{Method}  & \multicolumn{5}{c|}{CIFAR-10} & \multicolumn{5}{c}{CIFAR-100} \\
\cmidrule{2-11}
 & FA & RA & TA & MIA & AG$\color{blue}\downarrow$  & FA & RA & TA & MIA & AG$\color{blue}\downarrow$ \\
    \midrule
     \multicolumn{11}{c}{\cellcolor{gray!20}The proportion of forgotten data samples to all samples is 10\%} \\
    \midrule
    Retrain  & 93.38 & 100.0 & 92.64 & 13.36 &\cellcolor{gray!20} 0 & 74.76 & 99.98 & 72.43 & 56.36 &\cellcolor{gray!20} 0 \\
    FT  & 99.42\textcolor{gray}{(6.04)} & 99.96\textcolor{gray}{(0.04)} & 93.05\textcolor{gray}{(0.41)} & 2.82\textcolor{gray}{(10.54)} &\cellcolor{gray!20}4.26& 91.82\textcolor{gray}{(17.06)} & 94.12\textcolor{gray}{(5.86)} & 65.29\textcolor{gray}{(7.14)} & 16.69\textcolor{gray}{(39.67)} &\cellcolor{gray!20}17.43\\
   GA & 99.38\textcolor{gray}{(6.00)} & 99.38\textcolor{gray}{(0.62)} & 93.35\textcolor{gray}{(0.71)} & 1.22\textcolor{gray}{(12.14)} &\cellcolor{gray!20}4.87& 95.02\textcolor{gray}{(20.26)} & 96.86\textcolor{gray}{(3.12)} & 71.84\textcolor{gray}{(0.59)} & 9.56\textcolor{gray}{(46.80)} &\cellcolor{gray!20}17.69\\
    IU & 95.64\textcolor{gray}{(2.26)}& 95.53\textcolor{gray}{(4.47)} & 87.85\textcolor{gray}{(4.79)} & 7.93\textcolor{gray}{(5.43)} &\cellcolor{gray!20}4.24& 3.69\textcolor{gray}{(71.07)} & 4.03\textcolor{gray}{(95.95)} & 3.85\textcolor{gray}{(68.58)} & 98.8\textcolor{gray}{(42.44)}&\cellcolor{gray!20}69.51\\
    RL  &96.29\textcolor{gray}{(2.91)}&98.96\textcolor{gray}{(1.04)} &92.23\textcolor{gray}{(0.41)} &16.67\textcolor{gray}{(3.31)} &1.92\cellcolor{gray!20}& 68.51\textcolor{gray}{(6.25)}& 98.91\textcolor{gray}{(1.07)} & 69.47\textcolor{gray}{(2.96)} & 85.62\textcolor{gray}{(29.26)}&\cellcolor{gray!20}9.89\\
    $\ell_1$-sparse & 98.71\textcolor{gray}{(5.33)} & 99.88\textcolor{gray}{(0.12)} & 92.92\textcolor{gray}{(0.28)} & 5.29\textcolor{gray}{(8.07)} &\cellcolor{gray!20}3.45& 88.20\textcolor{gray}{(13.44)} & 99.65\textcolor{gray}{(0.33)} & 70.80\textcolor{gray}{(1.63)} & 29.64\textcolor{gray}{(26.72)} &\cellcolor{gray!20}10.53\\
    SalUn  & 96.82\textcolor{gray}{(3.44)} & 99.45\textcolor{gray}{(0.55)} & 92.32\textcolor{gray}{(0.32)} & 14.27\textcolor{gray}{(0.91)} &\cellcolor{gray!20}1.31& 82.22\textcolor{gray}{(7.46)}& 98.71\textcolor{gray}{(1.27)}& 67.38\textcolor{gray}{(5.05)}&  66.78\textcolor{gray}{(10.42)} &\cellcolor{gray!20}6.05\\
    Q-MUL & 97.11\textcolor{gray}{(3.3)} & 99.67\textcolor{gray}{(0.33)} & 92.39\textcolor{gray}{(0.25)} & 12.49\textcolor{gray}{(0.87)} &\cellcolor{gray!20}\underline{1.30} &75.71\textcolor{gray}{(0.95)}  &97.89\textcolor{gray}{(2.09)}  &67.27\textcolor{gray}{(5.16)}  &52.11\textcolor{gray}{(4.25)}  &\underline{3.11}\cellcolor{gray!20}\\
    OEU &93.58\textcolor{gray}{(0.20)} &99.86\textcolor{gray}{(0.14)} &92.97\textcolor{gray}{(0.33)} &15.00\textcolor{gray}{(1.64)} &\textbf{0.58}\cellcolor{gray!20} 
    &74.86\textcolor{gray}{(0.10)}  &99.36\textcolor{gray}{(0.62)}  &67.61\textcolor{gray}{(4.82)}  &49.56\textcolor{gray}{(6.80)}  &\textbf{3.08}\cellcolor{gray!20}\\
    \midrule
    \multicolumn{11}{c}
    {\cellcolor{gray!20}The proportion of forgotten data samples to all samples is 30\%} \\
    \midrule
    Retrain  & 91.61 & 99.97 & 91.29 & 16.11 &\cellcolor{gray!20} 0  & 67.67 & 99.98 & 67.54 & 58.47 &\cellcolor{gray!20} 0  \\
    FT  & 99.19\textcolor{gray}{(7.58)} & 99.95\textcolor{gray}{(0.02)} & 92.94\textcolor{gray}{(1.65)} & 3.10\textcolor{gray}{(13.01)} &\cellcolor{gray!20}5.57 & 96.24\textcolor{gray}{(28.57)} & 99.94\textcolor{gray}{(0.04)} & 71.49\textcolor{gray}{(3.95)} & 20.57\textcolor{gray}{(37.90)} &\cellcolor{gray!20}17.62\\
    GA & 99.35\textcolor{gray}{(7.74)} & 99.38\textcolor{gray}{(0.59)} & 93.39\textcolor{gray}{(2.10)} & 1.28\textcolor{gray}{(14.83)} &\cellcolor{gray!20}6.32& 96.73\textcolor{gray}{(29.06)} & 97.30\textcolor{gray}{(2.68)} & 73.35\textcolor{gray}{(5.81)} & 6.19\textcolor{gray}{(52.28)} &\cellcolor{gray!20}22.46\\
    IU & 76.21\textcolor{gray}{(15.40)}& 75.93\textcolor{gray}{(24.04)} & 71.48\textcolor{gray}{(19.81)}  & 24.53\textcolor{gray}{(8.42)} &\cellcolor{gray!20}16.92&2.81\textcolor{gray}{(64.86)} & 3.19\textcolor{gray}{(96.79)} & 3.09\textcolor{gray}{(64.45)} & 96.33\textcolor{gray}{(37.86)}  &\cellcolor{gray!20}65.99\\
    RL  &  93.96\textcolor{gray}{(2.35)} & 96.03\textcolor{gray}{(3.94)} & 90.34\textcolor{gray}{(0.95)}& 19.07\textcolor{gray}{(2.96)} &\cellcolor{gray!20}2.55& 75.50\textcolor{gray}{(7.83)}& 99.58\textcolor{gray}{(0.40)} & 66.56\textcolor{gray}{(0.98)} & 87.33\textcolor{gray}{(28.86)}&\cellcolor{gray!20}9.52\\
    $\ell_1$-sparse & 99.06\textcolor{gray}{(7.45)} & 99.90\textcolor{gray}{(0.07)} & 93.01\textcolor{gray}{(1.72)} & 4.54\textcolor{gray}{(11.57)} &\cellcolor{gray!20}5.20& 88.59\textcolor{gray}{(20.92)} & 99.79\textcolor{gray}{(0.19)} & 70.72\textcolor{gray}{(3.18)} & 33.16\textcolor{gray}{(25.31)} &\cellcolor{gray!20}12.40 \\
    SalUn  & 96.17\textcolor{gray}{(4.56)} & 97.78\textcolor{gray}{(2.19)} & 91.57\textcolor{gray}{(0.30)} & 15.50\textcolor{gray}{(0.66)} &\cellcolor{gray!20}1.93& 79.60\textcolor{gray}{(11.93)} & 96.70\textcolor{gray}{(3.28)} & 63.41\textcolor{gray}{(4.13)} & 57.73\textcolor{gray}{(0.74)} &\cellcolor{gray!20}5.02\\
    Q-MUL &93.57\textcolor{gray}{(1.96)} &96.77\textcolor{gray}{(3.20)} &90.24\textcolor{gray}{(1.05)} &15.59\textcolor{gray}{(0.52)} &\underline{1.68}\cellcolor{gray!20} &73.90\textcolor{gray}{(6.23)}  &97.63\textcolor{gray}{(2.35)}  &65.80\textcolor{gray}{(1.74)}  &63.76\textcolor{gray}{(5.29)}  &\underline{3.90}\cellcolor{gray!20}\\
    OEU &95.25\textcolor{gray}{(3.64)} &99.83\textcolor{gray}{(0.14)} &91.64\textcolor{gray}{0.35)} &15.02\textcolor{gray}{(1.09)} &\textbf{1.31}\cellcolor{gray!20} 
    &66.41\textcolor{gray}{(1.26)}  &98.40\textcolor{gray}{(1.58)}  &64.59\textcolor{gray}{(2.95)}  &56.36\textcolor{gray}{(2.11)}  &\textbf{1.98}\cellcolor{gray!20}\\
    
    \midrule
    \multicolumn{11}{c}{\cellcolor{gray!20}The proportion of forgotten data samples to all samples is 50\%} \\
    \midrule
    Retrain  & 90.75 & 100.0 & 90.28 & 18.76 &\cellcolor{gray!20} 0   & 65.40 & 99.99 & 65.74 & 68.19 &\cellcolor{gray!20} 0  \\
    FT  & 99.29\textcolor{gray}{(8.54)} & 99.96\textcolor{gray}{(0.04)} & 93.28\textcolor{gray}{(3.00)} & 2.85\textcolor{gray}{(15.91)} &\cellcolor{gray!20}6.87& 97.10\textcolor{gray}{(31.70)} & 99.98\textcolor{gray}{(0.01)} & 72.82\textcolor{gray}{(7.08)} & 16.86\textcolor{gray}{(51.33)} &\cellcolor{gray!20} 22.71\\
   GA  & 99.34\textcolor{gray}{(8.59)} & 99.34\textcolor{gray}{(0.66)} & 93.44\textcolor{gray}{(3.16)} & 1.27\textcolor{gray}{(17.49)} &\cellcolor{gray!20}7.48& 96.82\textcolor{gray}{(31.42)} & 97.33\textcolor{gray}{(2.66)} & 73.37\textcolor{gray}{(7.63)} & 5.93\textcolor{gray}{(62.26)} &\cellcolor{gray!20}25.99\\
    IU  & 16.12\textcolor{gray}{(74.63)} & 16.02\textcolor{gray}{(83.98)} & 16.22\textcolor{gray}{(74.06)} & 16.36\textcolor{gray}{(2.40)} &\cellcolor{gray!20}58.77& 0.96\textcolor{gray}{(64.44)} & 1.04\textcolor{gray}{(98.95)} & 1.00\textcolor{gray}{(64.74)} & 98.96\textcolor{gray}{(30.77)} &\cellcolor{gray!20}64.73\\
    RL  & 95.32\textcolor{gray}{(4.57)}& 97.17\textcolor{gray}{(2.83)}& 90.85\textcolor{gray}{(0.57)}& 21.33\textcolor{gray}{(2.57)}&\cellcolor{gray!20}2.64& 69.20\textcolor{gray}{(3.80)} & 85.60\textcolor{gray}{(14.39)}& 64.39\textcolor{gray}{(1.35)}& 54.58\textcolor{gray}{(13.61)} &\cellcolor{gray!20}8.29\\
    $\ell_1$-sparse & 99.25\textcolor{gray}{(8.50)} & 99.95\textcolor{gray}{(0.05)} & 92.67\textcolor{gray}{(2.39)} & 3.91\textcolor{gray}{(14.85)} &\cellcolor{gray!20}6.45& 96.48\textcolor{gray}{(31.08)} & 99.98\textcolor{gray}{(0.01)} & 72.26\textcolor{gray}{(6.52)} & 25.11\textcolor{gray}{(43.08)} & 20.17\cellcolor{gray!20}\\
    SalUn  & 96.56\textcolor{gray}{(5.81)} & 98.03\textcolor{gray}{(1.97)} & 91.55\textcolor{gray}{(1.27)} &18.05\textcolor{gray}{(0.71)} &\cellcolor{gray!20}2.44& 87.75\textcolor{gray}{(22.35)} & 98.99\textcolor{gray}{(1.00)} & 64.69\textcolor{gray}{(1.05)}   & 73.12\textcolor{gray}{(4.93)}  &\cellcolor{gray!20}7.33\\
    Q-MUL  &91.09 \textcolor{gray}{(0.34)} &93.94\textcolor{gray}{(6.06)} &89.12\textcolor{gray}{(1.16)} &18.68   
   \textcolor{gray}{(0.08)} &\cellcolor{gray!20}\underline{1.91}&67.25\textcolor{gray}{(1.85)}  &89.65\textcolor{gray}{(10.34)}  &61.38\textcolor{gray}{(4.36)}  &56.52\textcolor{gray}{(11.67)}  & \underline{7.06}\cellcolor{gray!20} \\
    OEU &95.25\textcolor{gray}{(4.50)} &99.64\textcolor{gray}{(0.36)} &90.81\textcolor{gray}{(0.59)} &18.72\textcolor{gray}{(0.04)} &\textbf{1.37}\cellcolor{gray!20} 
    &70.42\textcolor{gray}{(5.02)}  &98.79\textcolor{gray}{(1.20)}  &60.38\textcolor{gray}{(5.36)}  &65.08\textcolor{gray}{(3.11)}  &\textbf{3.67}\cellcolor{gray!20}\\
    \bottomrule
    \end{tabular} %
  \label{tab1}%
\end{table*}

\noindent\textbf{Evaluation Metrics.} Following established protocols~\cite{jia2023model, fan2024salun, tong2025robust}, we adopt five complementary metrics to assess both forgetting effectiveness and utility preservation. \textbf{Forget Accuracy (FA)} measures accuracy on forgotten data, which should ideally match that of a retrained model. \textbf{Retain Accuracy (RA)} evaluates accuracy on retained data to ensure knowledge preservation. \textbf{Test Accuracy (TA)} assesses generalization on held-out data. \textbf{Membership Inference Attack (MIA)}~\cite{shokri2017membership} quantifies privacy leakage by measuring the distinguishability between forgotten and unseen samples. \textbf{Average Gap (AG)} computes the mean absolute difference between each method and Retrain across the above four metrics. Since Retrain represents the gold standard, effective unlearning is characterized by minimal gaps rather than simply higher or lower values. Therefore, AG serves as our primary indicator, where lower values indicate closer approximation to the ideal retrained model. We provide further elaboration in the Appendix~\ref{Discussion_Evaluation_Metrics}.

\noindent\textbf{Implementation details.}
We adopt LSQ+~\cite{bhalgat2020lsq+} as the default quantization method. For 
training, we use SGD with an initial learning rate of 0.1 and cosine annealing. 
The original model is trained for 182 epochs, while Retrain uses 160 epochs 
with identical optimizer settings. We primarily focus on random data forgetting 
due to its practical relevance. Class-wise forgetting results are provided in the Appendix~\ref{classwise},and additional training configurations, hyperparameter 
settings are presented in Appendix~\ref{details}

\begin{table}[t]
    \caption{ Impact of Removing
 Components of OEU. Ablation study is from ResNet18 on CIFAR-100. The unlearning scenario is random data forgetting (30\%). }
  \centering
  \scriptsize
  \setlength{\tabcolsep}{1.5mm}{
    \begin{tabular}{c|ccccc}
\toprule   
Method 
  & FA & RA & TA & MIA & AG$\color{blue}\downarrow$  \\
    \midrule
    Retrain & 67.67 & 99.98 & 67.54 & 58.47 &\cellcolor{gray!20} 0  \\
    OEU  &66.41\textcolor{gray}{(1.26)}  &98.40\textcolor{gray}{(1.58)}  &64.59\textcolor{gray}{(2.95)}  &56.36\textcolor{gray}{(2.11)}  &\textbf{1.98}\cellcolor{gray!20}\\
    \textit{\textbf{w/o}}GOP &63.16\textcolor{gray}{(4.51)} &97.13\textcolor{gray}{(2.85)} &63.87\textcolor{gray}{(3.67)} &61.54 \textcolor{gray}{(3.07)} &3.53\cellcolor{gray!20}  \\
    \textit{\textbf{w/o}}EGU &75.10\textcolor{gray}{(7.43)} &99.43\textcolor{gray}{(0.55)} &67.64\textcolor{gray}{(0.10)} &50.62\textcolor{gray}{(7.85)} &3.98\cellcolor{gray!20}  \\
    $\text{GOP}_{global}$ &57.99\textcolor{gray}{(9.68)} &97.04\textcolor{gray}{(2.94)} &62.35\textcolor{gray}{(5.19)} &62.24\textcolor{gray}{(3.77)} &5.40\cellcolor{gray!20}  \\
    \bottomrule
    \end{tabular}%
    }
  \label{tab4}
\end{table}

\subsection{Performance Evaluation}
As presented in Table~\ref{tab1}, we evaluate various MU methods for quantized ResNet18 under random data forgetting. OEU consistently achieves the best performance across different forgetting ratios on both datasets. On CIFAR-10 with 10\% forgetting ratio, OEU attains FA of 93.58\% and RA of 99.86\%, with minimal gaps of only 0.20\% and 0.14\% from Retrain, significantly outperforming Q-MUL which has gaps of 3.30\% and 0.33\%. The average gap of OEU is only 0.58\%, which is 55\% lower than Q-MUL's 1.30\%. As the forgetting ratio increases to 30\% and 50\%, OEU maintains its superiority with average gaps of 1.31\% and 1.37\%, compared to Q-MUL's 1.68\% and 1.91\% respectively. These results demonstrate that OEU effectively removes the influence of forgotten data while preserving the model's knowledge on retained data.

On CIFAR-100, the advantages of OEU become even more pronounced. At 10\% forgetting ratio, OEU achieves an average gap of 3.08\%, slightly better than Q-MUL's 3.11\%. When the forgetting ratio increases to 30\%, OEU achieves FA of 66.41\% with a gap of 1.26\% from Retrain, resulting in an average gap of only 1.98\%, approximately 49\% lower than Q-MUL's 3.90\%. This indicates that OEU maintains superior unlearning effectiveness even under more challenging conditions. At 50\% forgetting ratio, OEU achieves an average gap of 3.67\%, substantially outperforming Q-MUL's 7.06\% and SalUn's 7.33\%. These results validate the effectiveness of OEU in balancing forgetting efficacy and model utility preservation for QNNs. More additional results can be found in the Appendix~\ref{Tiny-Imagenet} and ~\ref{mobilenet}.

\begin{figure*}[t]
    \centering
    \begin{subfigure}[b]{\textwidth}
        \centering
        \includegraphics[width=0.19\textwidth]{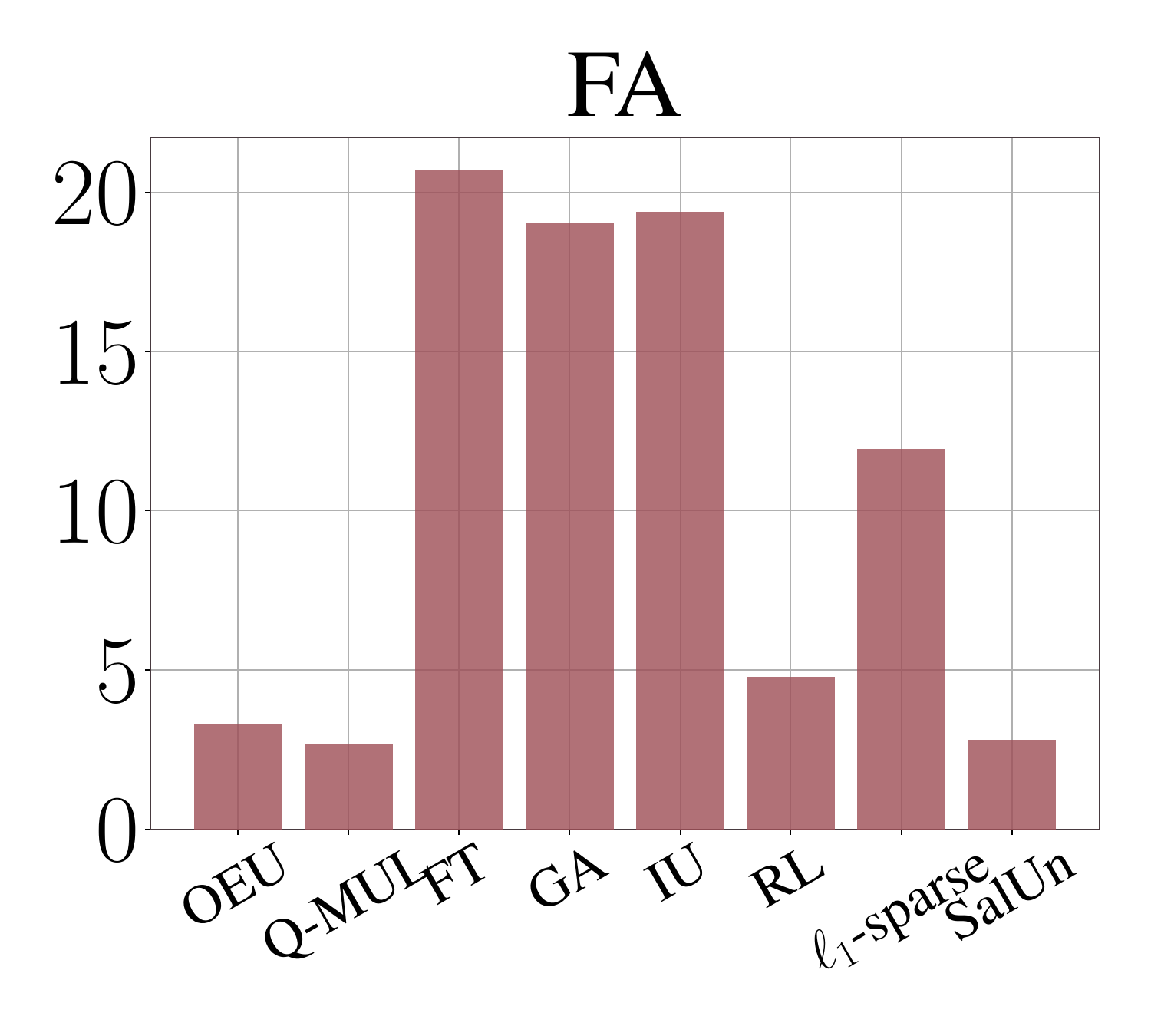}
        \includegraphics[width=0.19\textwidth]{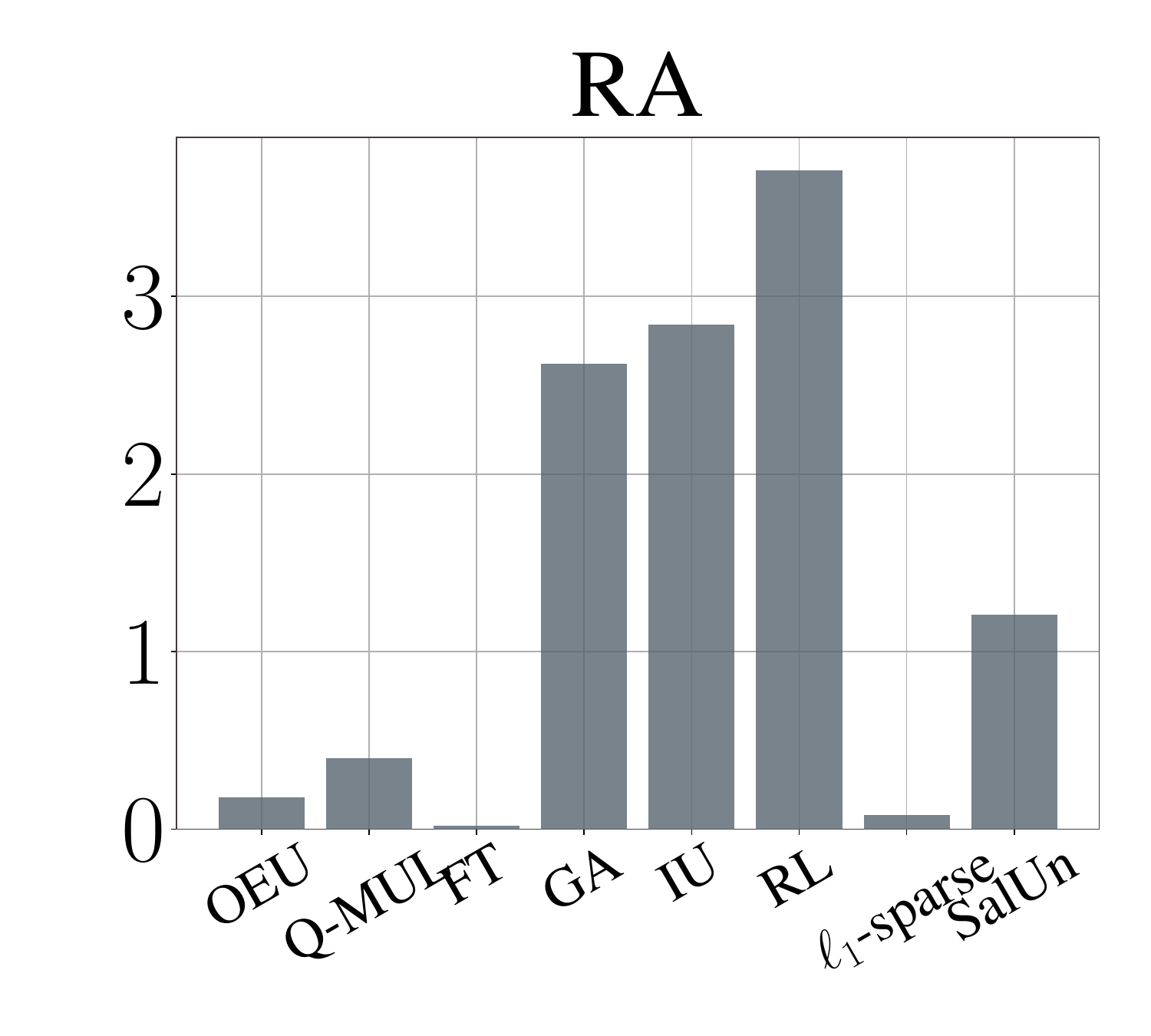}
        \includegraphics[width=0.19\textwidth]{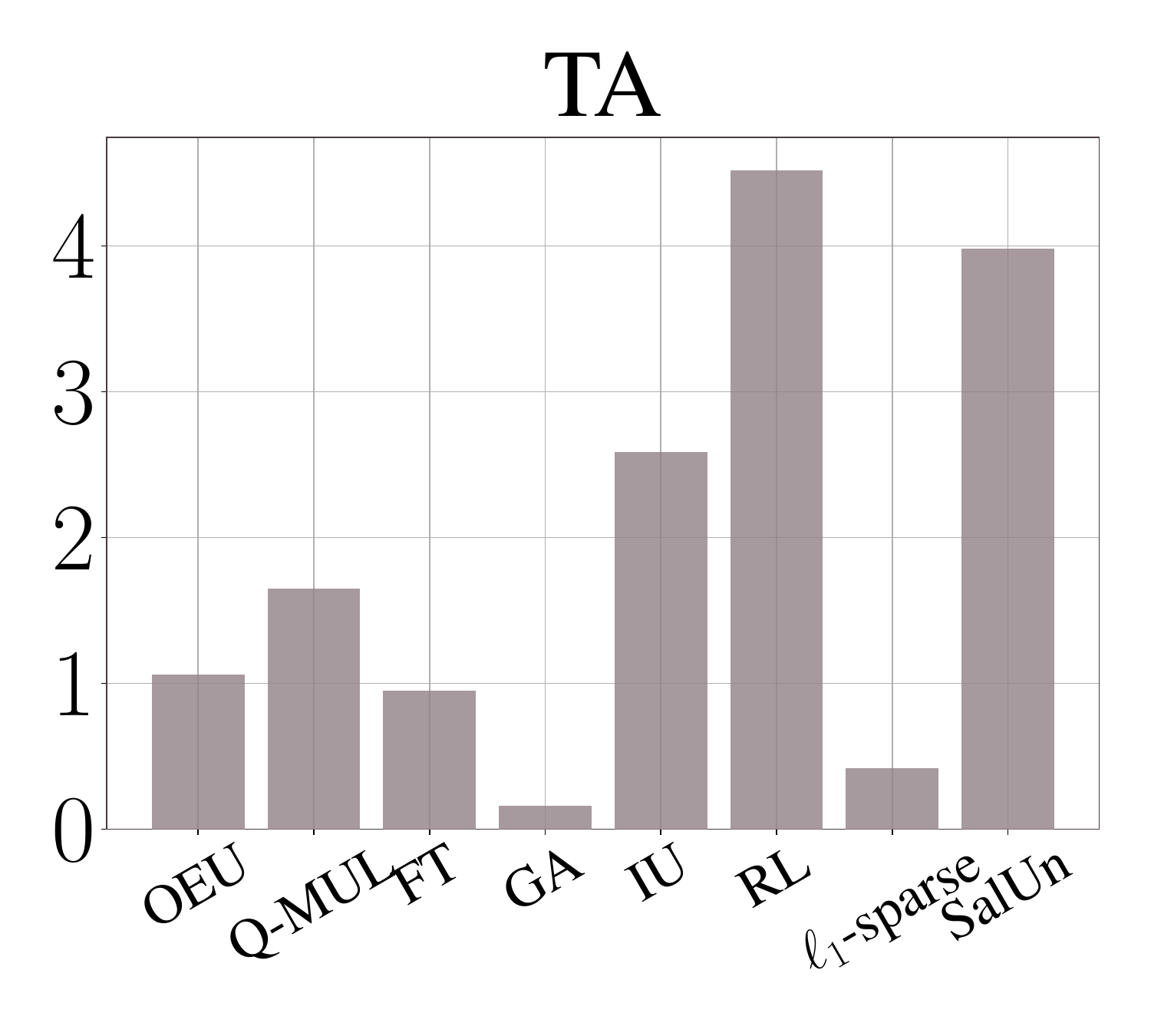}
        \includegraphics[width=0.19\textwidth]{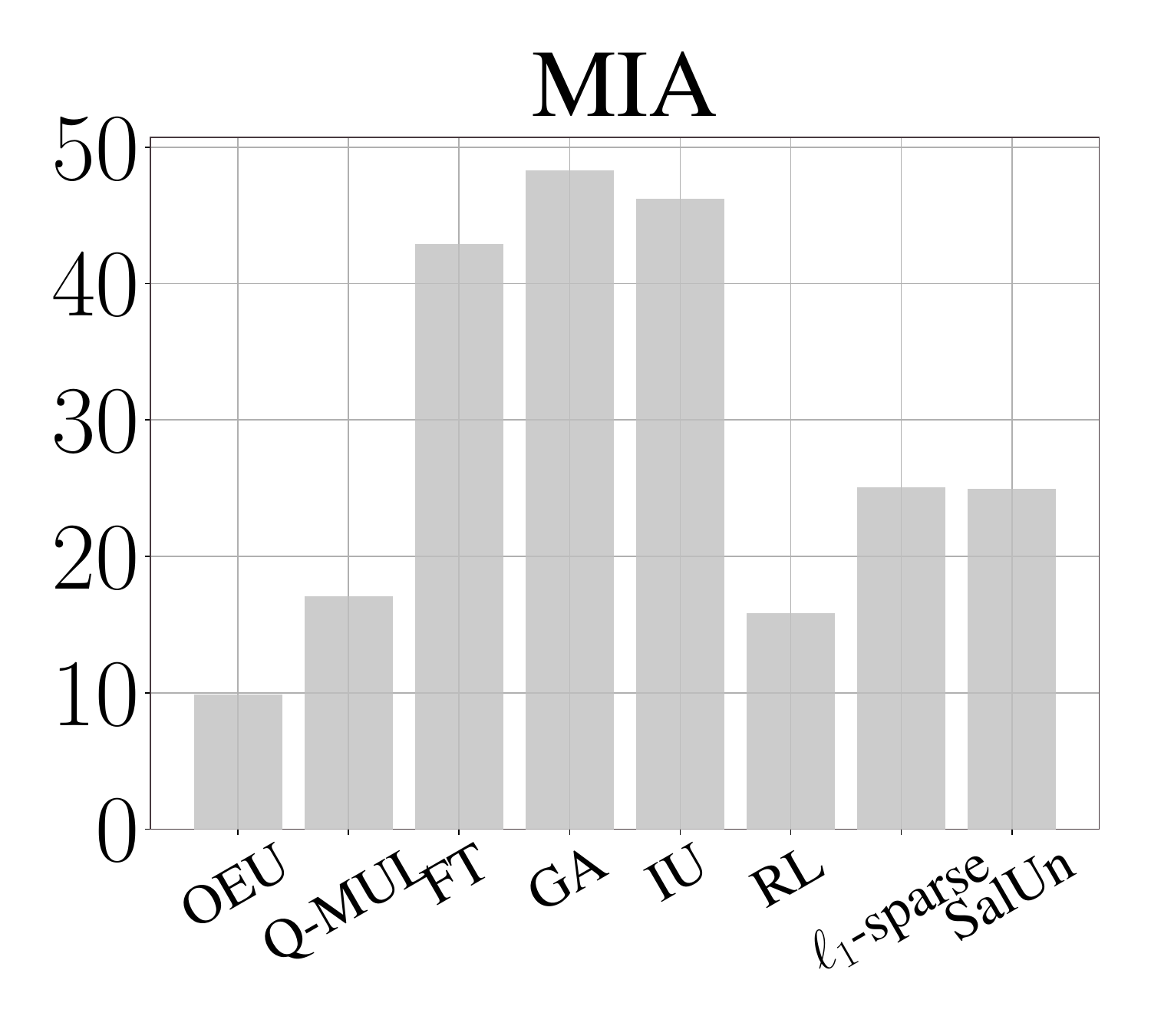}
        \includegraphics[width=0.19\textwidth]{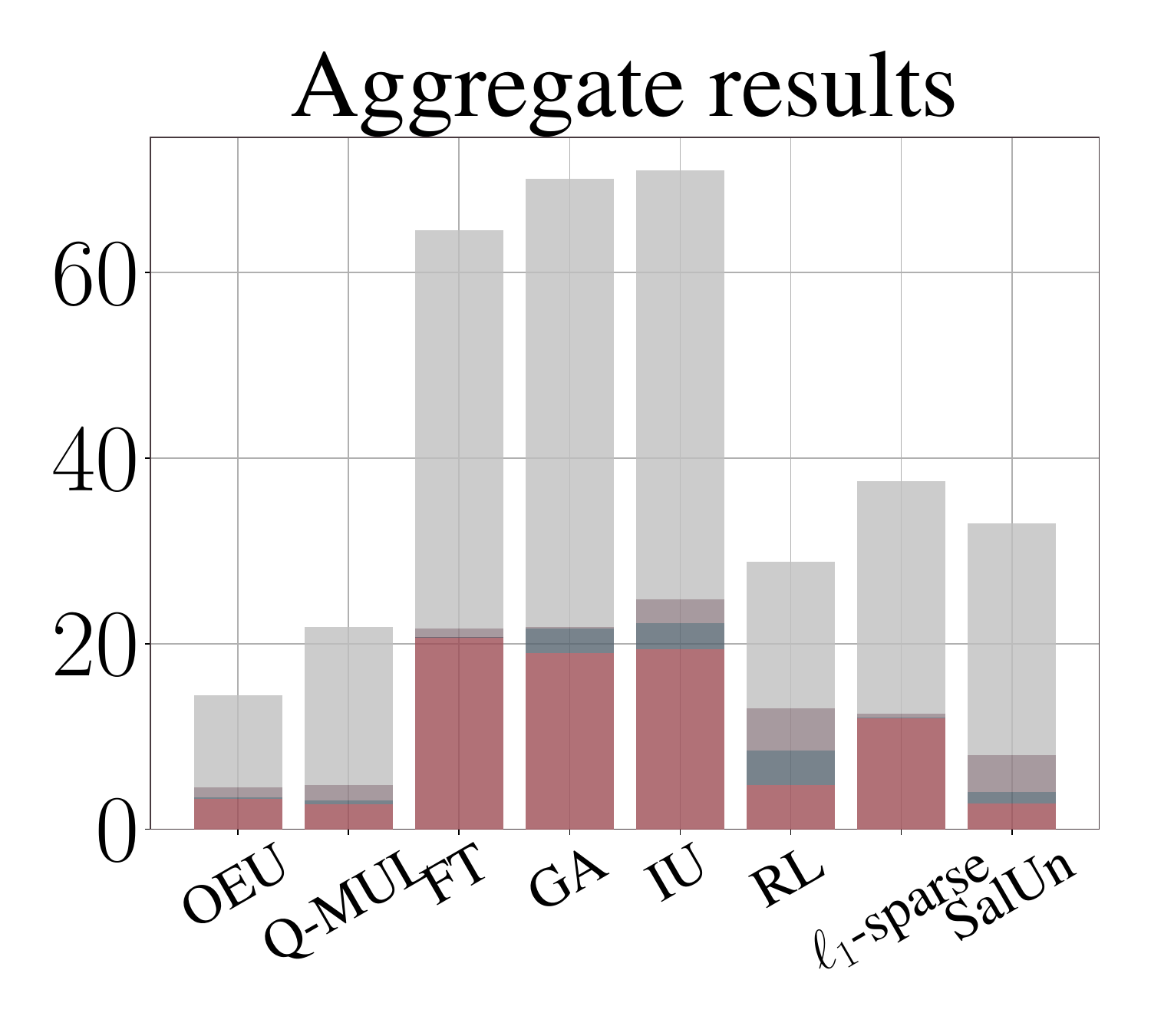}
        \label{fig:resnet}
    \end{subfigure}
    
    
    \begin{subfigure}[b]{\textwidth}
        \centering
        \includegraphics[width=0.19\textwidth]{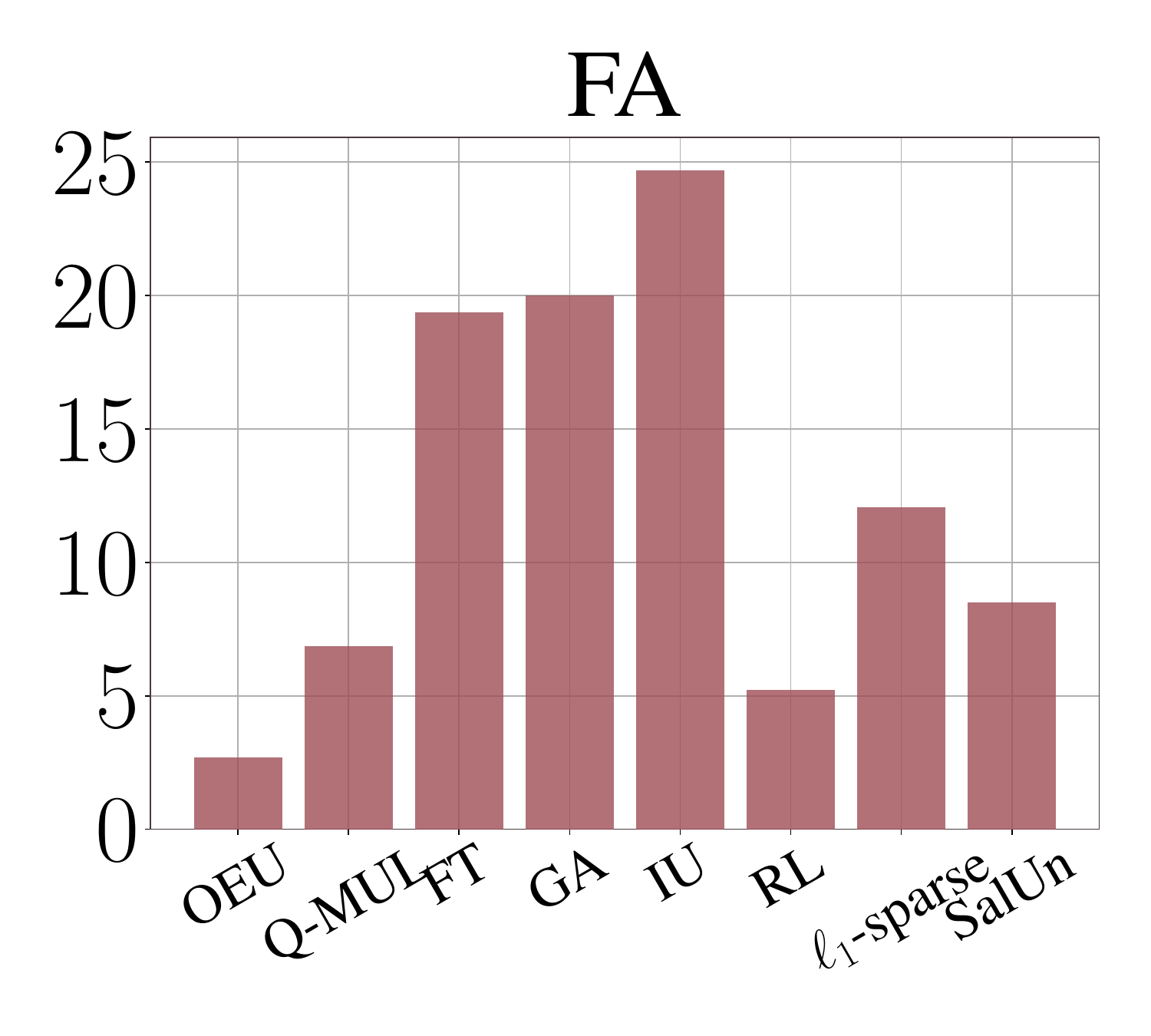}
        \includegraphics[width=0.19\textwidth]{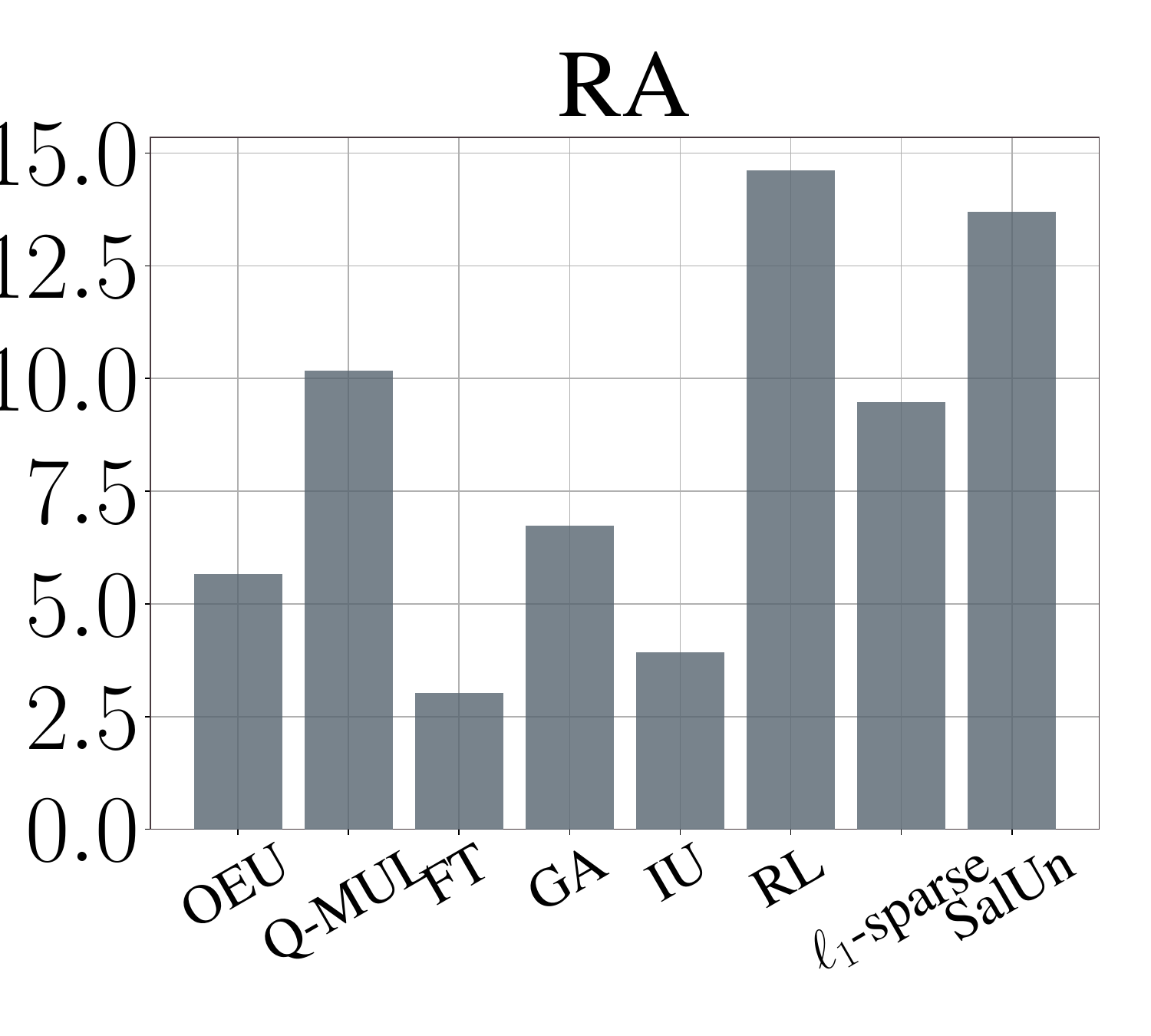}
        \includegraphics[width=0.19\textwidth]{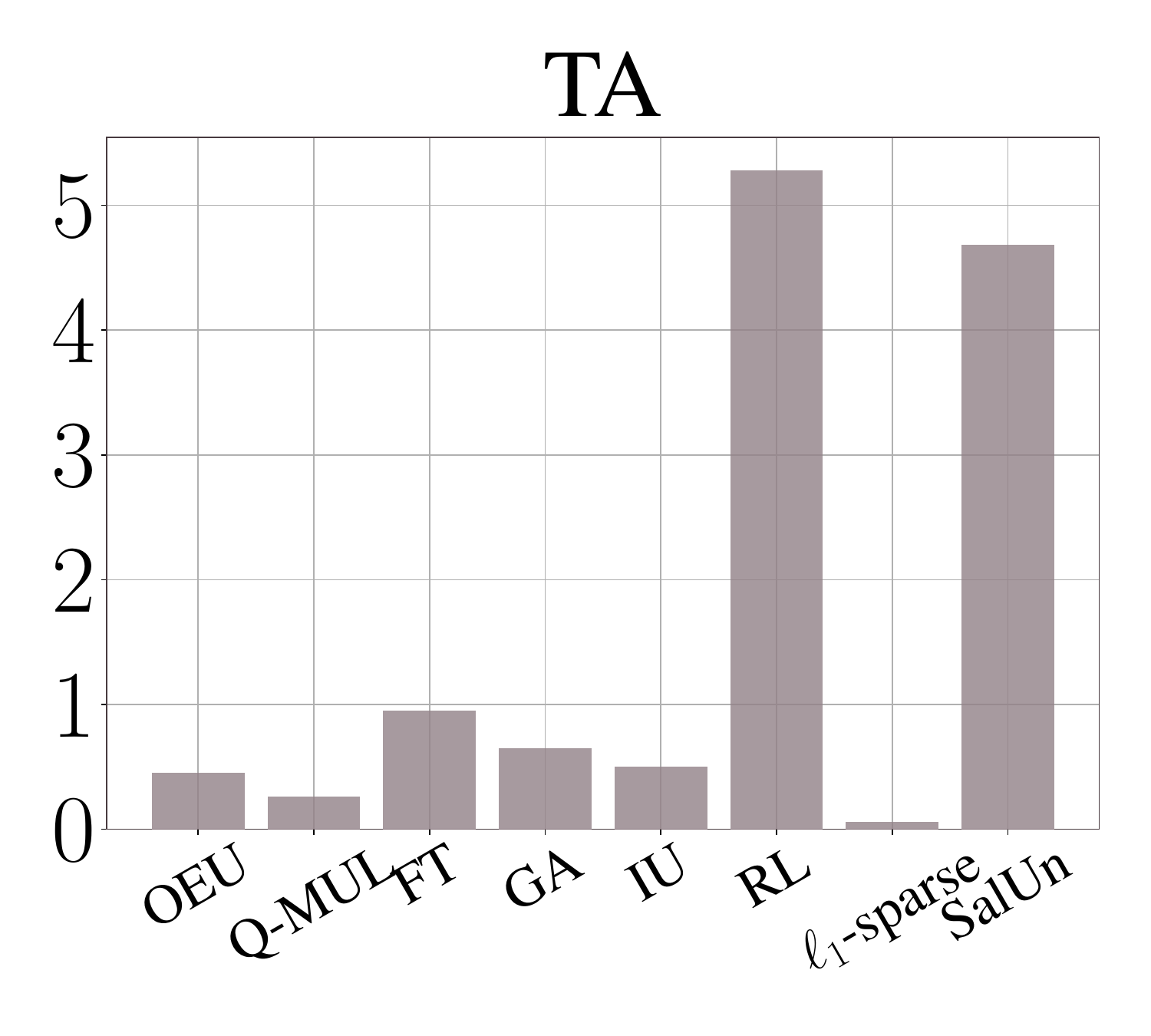}
        \includegraphics[width=0.19\textwidth]{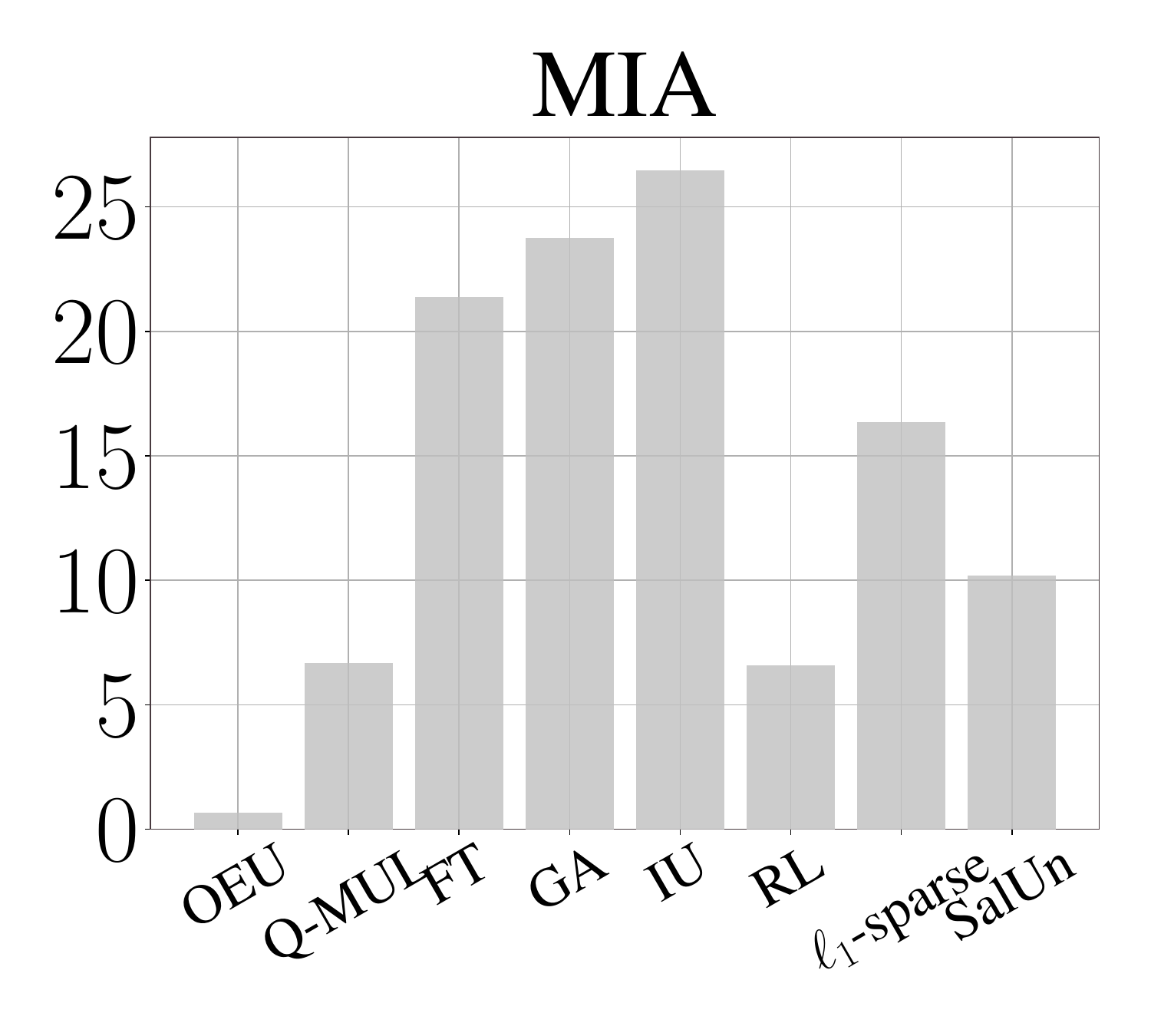}
        \includegraphics[width=0.19\textwidth]{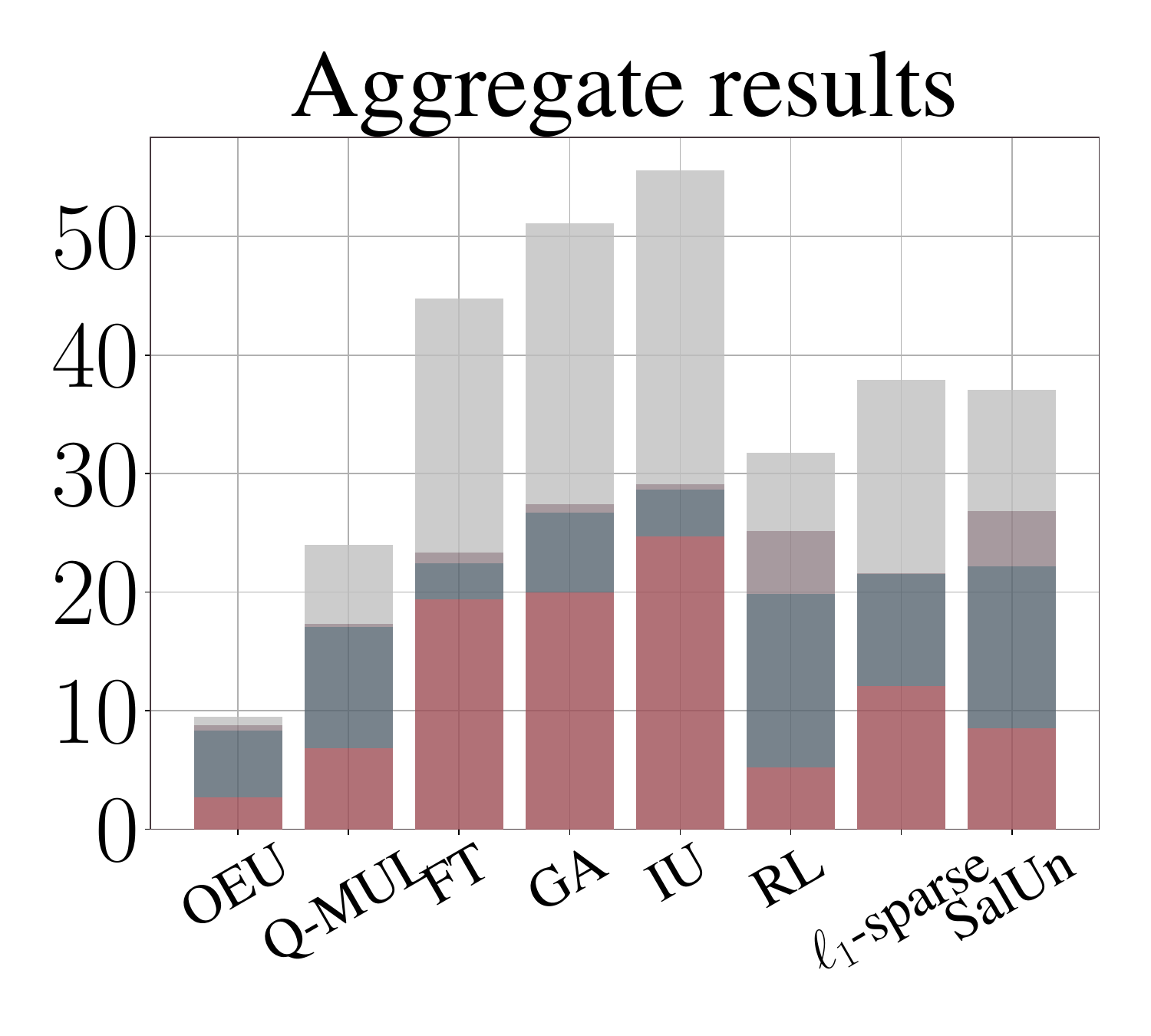}
        \label{fig:pact}
    \end{subfigure}
    
    
    \begin{subfigure}[b]{\textwidth}
        \centering
        \includegraphics[width=0.19\textwidth]{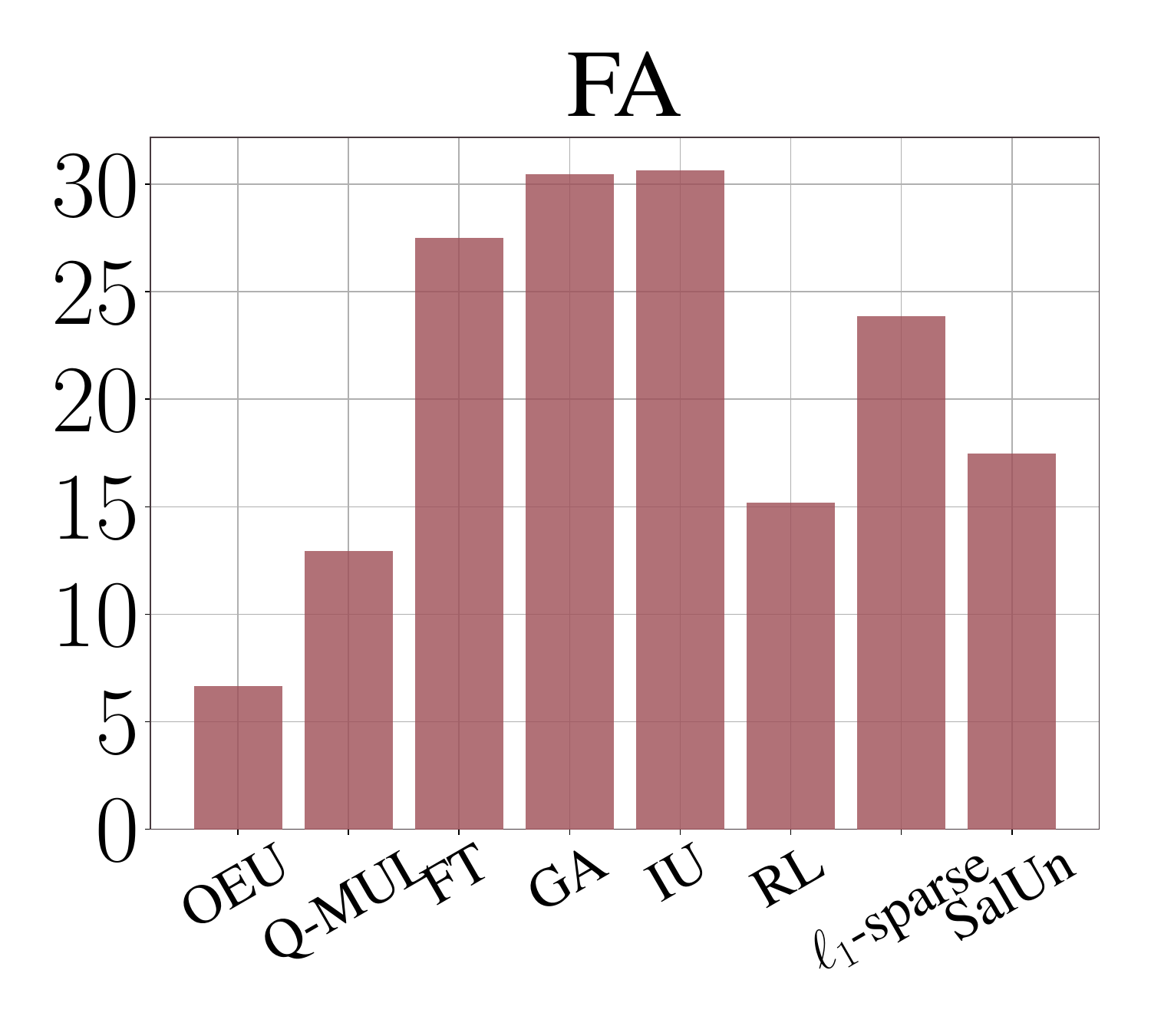}
        \includegraphics[width=0.19\textwidth]{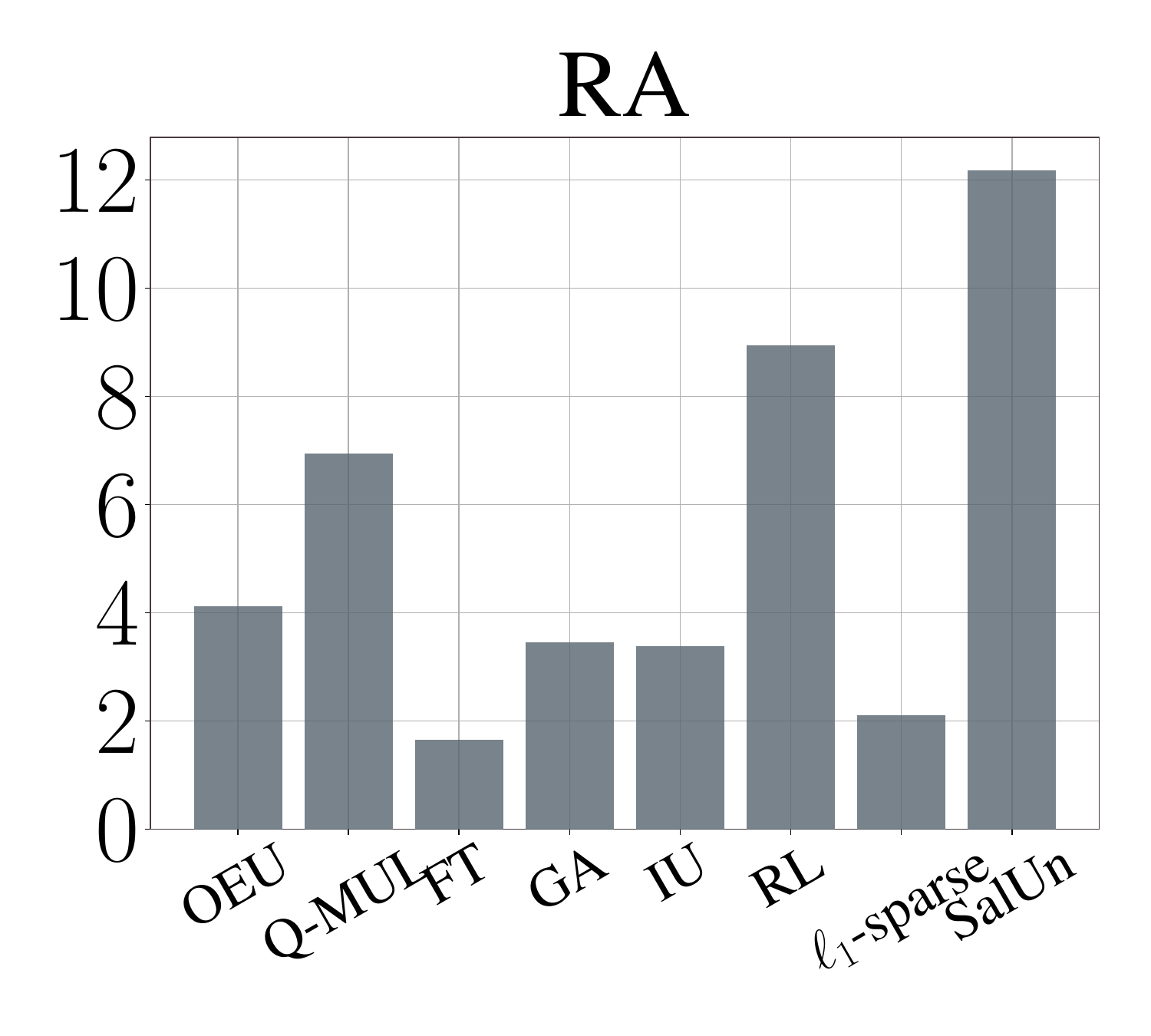}
        \includegraphics[width=0.19\textwidth]{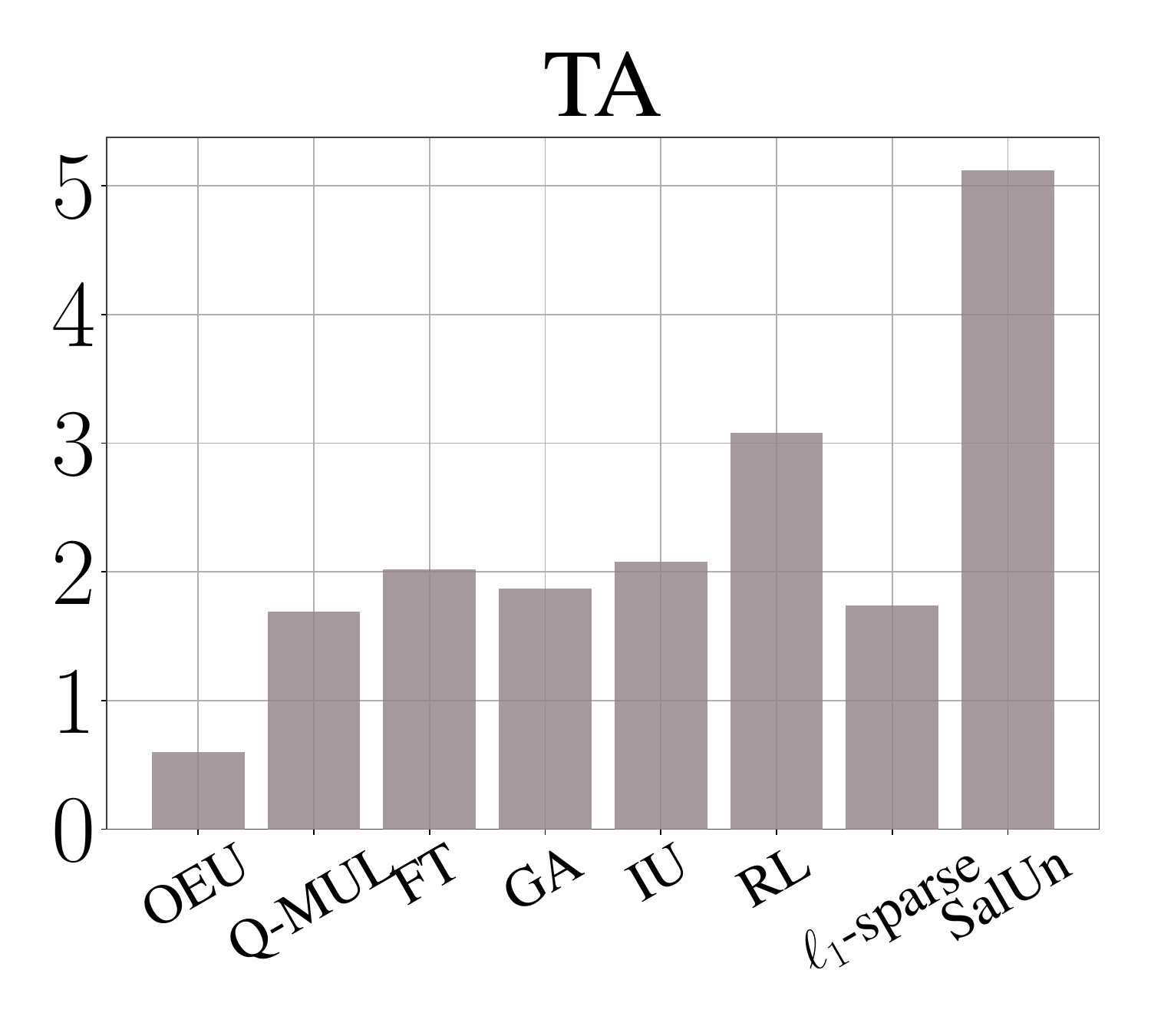}
        \includegraphics[width=0.19\textwidth]{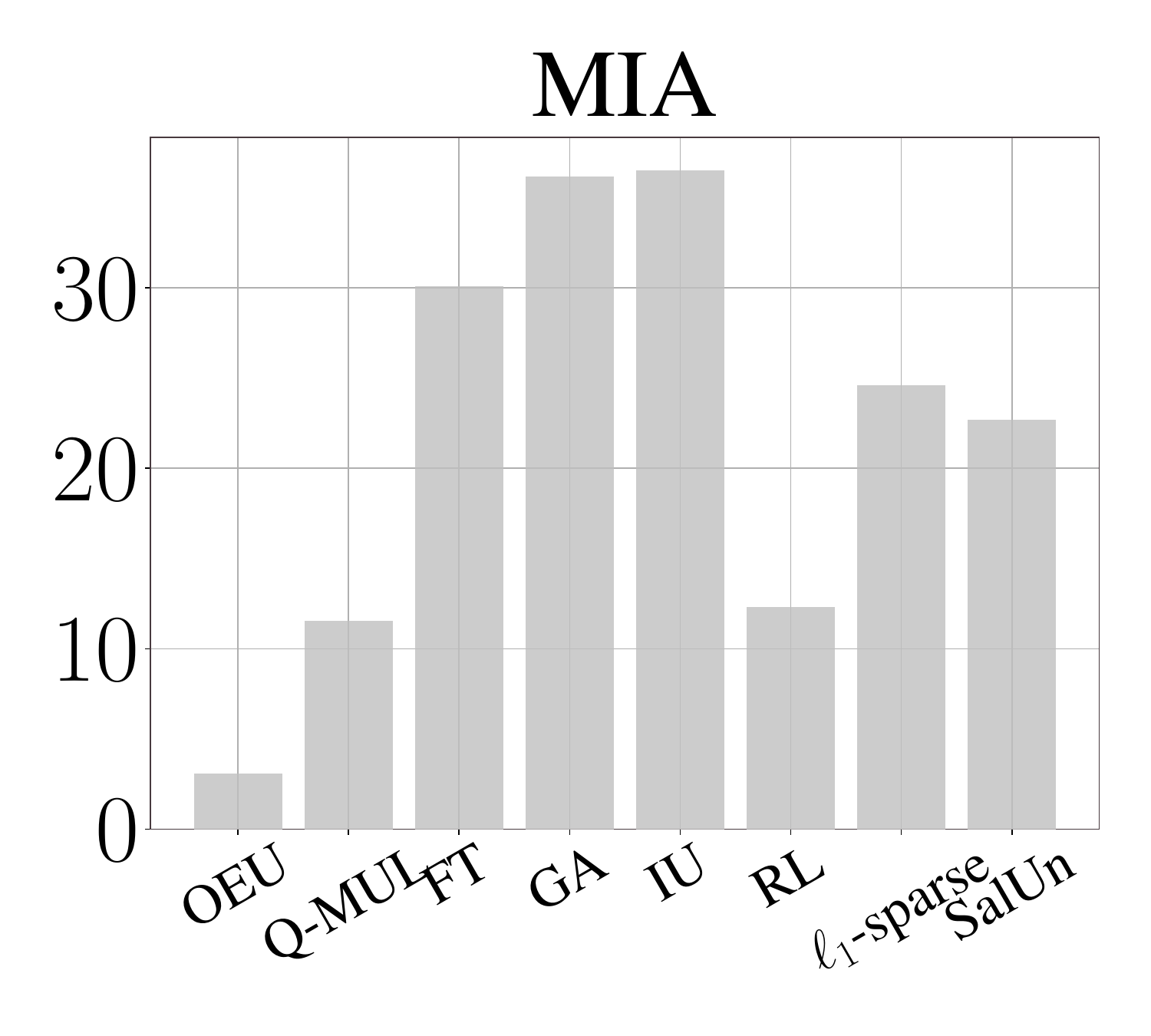}
        \includegraphics[width=0.19\textwidth]{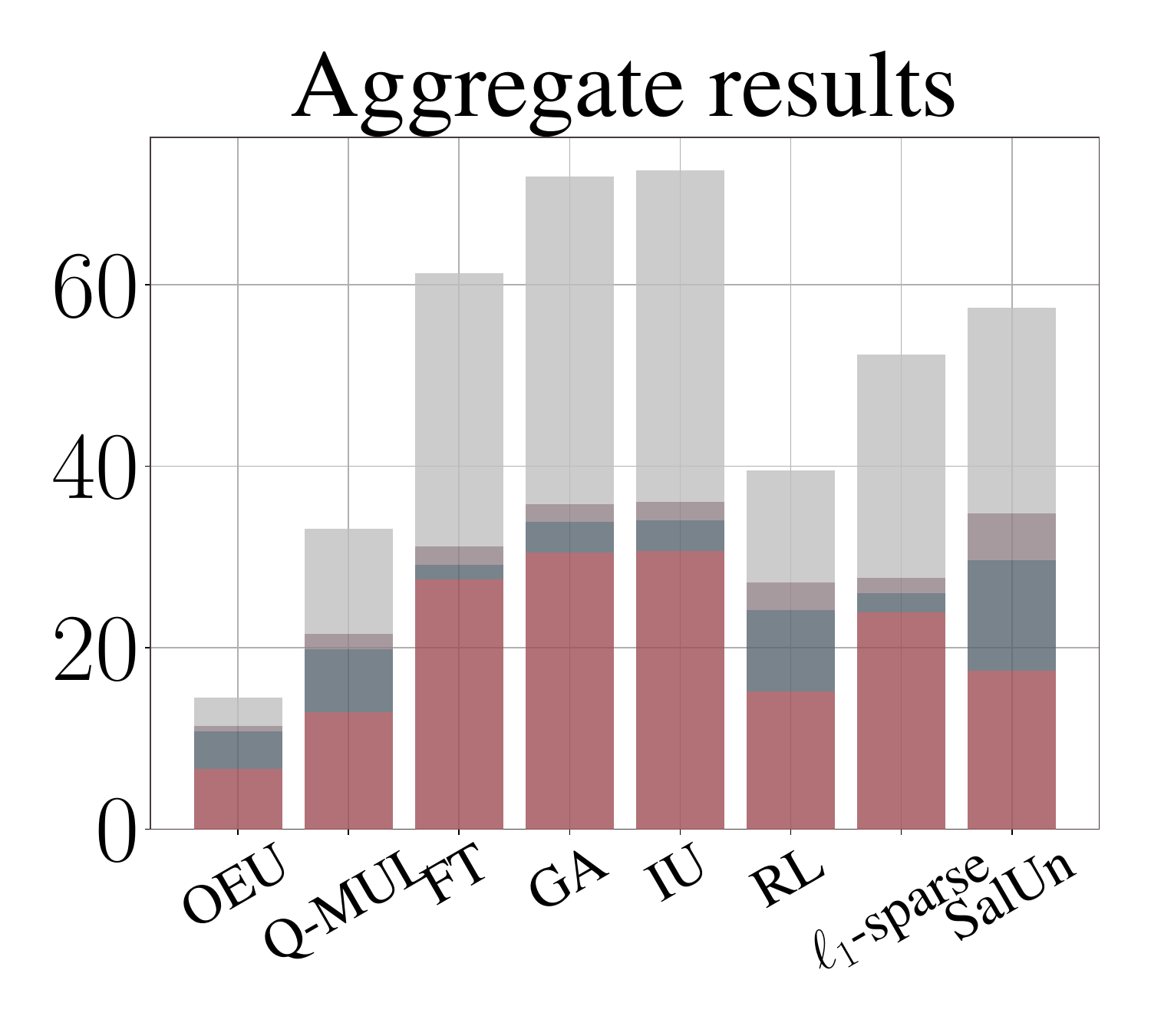}
        \label{fig:dsq}
    \end{subfigure}
    \caption{Performance gaps of ResNet18 on CIFAR-100. The three rows correspond to full-precision ResNet18, ResNet18 quantized with PACT~\cite{choi2018pact}, and ResNet18 quantized with DSQ~\cite{gong2019differentiable}, respectively. The unlearning scenario is random data forgetting (10\%). A shorter bar (smaller gap) indicates performance is closer to that of the Retrained model. The average gap for each method is calculated by dividing the values in the Aggregate Results bar chart by 4.}
    \label{fig3}
    \vspace{-0.5em}
\end{figure*}

\subsection{The Impact of Different Components}
We conduct ablation experiments with two variants: \textit{w/o} GOP and \textit{w/o} EGU. As shown in Table~\ref{tab4}, the complete OEU achieves an average gap of only 1.98\%. When GOP is removed, AG rises to 3.53\%, with the RA gap increasing from 1.58\% to 2.85\% and the FA gap from 1.26\% to 4.51\%. This demonstrates that without orthogonal projection, the forgetting gradient conflicts with the retain gradient, degrading both forgetting precision and retain performance. When EGU is removed, the model reverts to the ``learning wrong labels'' paradigm, resulting in an FA of 75.10\%---significantly higher than the retrained model's 67.67\%---indicating incomplete forgetting. The MIA gap also increases from 2.11\% to 7.85\%, revealing weaker privacy protection. These results validate that both components are indispensable: EGU defines the appropriate forgetting objective, while GOP ensures conflict-free optimization.

We further investigate the impact of projection granularity by comparing layer-wise projection with global projection. Replacing layer-wise GOP with global projection increases AG from 1.98\% to 5.40\%, with FA dropping from 66.41\% to 57.99\%, far below the retrained model's 67.67\%, indicating over-forgetting. Global projection computes a single orthogonal direction across all parameters, failing to capture layer-specific gradient dynamics. In contrast, layer-wise projection allows each layer to independently resolve gradient conflicts, providing finer-grained control over the forgetting-retain trade-off.

\subsection{OEU with Full-precision Models}
To more comprehensively evaluate the generalizability of OEU beyond quantized 
settings, we compare the effects of different unlearning methods on full-precision 
models, as shown in Figure \ref{fig3}. The issues of directional gradient conflicts and biased forgetting objectives 
also exist in full-precision models, although they are not as severe as in 
quantized models with constrained parameter spaces. Nevertheless, OEU can 
still effectively address these two issues. Specifically, the average gap 
of OEU reaches only 3.60\%, which is significantly lower than that of existing 
methods. This demonstrates that OEU can achieve superior unlearning performance 
even in full-precision models, further validating the effectiveness of our approach.


\subsection{OEU with Different QAT Methods}
Beyond LSQ+, we validate OEU under other QAT methods including PACT~\cite{choi2018pact} and DSQ~\cite{gong2019differentiable}. As shown in Figure~\ref{fig3}, OEU consistently achieves the smallest performance gaps across all metrics for both quantization methods. Compared to Q-MUL, OEU reduces the average gap by 60.5\% for PACT and 56.2\% for DSQ. Moreover, OEU demonstrates superior MIA performance, indicating more thorough forgetting of target data. These results validate the generalizability of OEU across different quantization-aware training frameworks.

\section{Conclusions}
In this paper, we address machine unlearning for quantized neural networks by identifying two fundamental limitations of existing methods: the ``learning wrong answers'' paradigm that introduces biases, and scalar gradient reweighting that ignores directional conflicts. We propose orthogonal entropy unlearning, featuring entropy-guided unlearning that provides an unbiased forgetting direction by maximizing prediction uncertainty, and gradient orthogonal projection that mathematically guarantees conflict-free optimization. Extensive experiments demonstrate that OEU consistently outperforms existing methods across multiple datasets and quantization configurations.

\section*{Impact Statement}
This paper presents work whose goal is to advance the field of machine learning, with direct relevance to privacy regulations such as GDPR that mandate individuals' right to data removal. Incomplete forgetting and residual information leakage are shared risks across all approximate unlearning methods. OEU specifically mitigates these: entropy-guided unlearning drives predictions toward maximum uncertainty, inherently reducing identifiable patterns from forgotten data; gradient orthogonal projection ensures forgetting does not compromise retained knowledge, maintaining a stable and reliable unlearned model.

\bibliography{main}
\bibliographystyle{icml2026}

\newpage
\appendix
\onecolumn
\section{Appendix}

\subsection{More Discussion on Evaluation Metrics}
\label{Discussion_Evaluation_Metrics}
We provide additional justification for our choice of Average Gap (AG) as the primary evaluation metric.

\textbf{Consistency with Existing Literature.} We adopt AG as the primary evaluation metric to maintain consistency with the established machine unlearning literature. Baseline methods such as $\ell_1$-sparse~\cite{jia2023model}, SalUn~\cite{fan2024salun},and Q-MUL~\cite{tong2025robust} employ the same evaluation framework, which facilitates direct and objective comparisons across different approaches. 

\textbf{Measuring Deviation from the Gold Standard.} The fundamental purpose of AG is to quantify the overall deviation of an unlearning method from the ``gold standard''---the Retrain baseline, which represents the ideal unlearning outcome achieved by retraining from scratch exclusively on the retain set $D_r$. By computing the mean absolute deviation across all four metrics, AG provides a holistic assessment of how closely an approximate unlearning method approximates this ideal behavior.

\textbf{Balanced Evaluation of Dual Objectives.} Machine unlearning inherently involves two competing objectives: (1) \emph{unlearning effectiveness}---ensuring that the influence of forgotten data is thoroughly removed, and (2) \emph{utility preservation}---maintaining the model's performance on retained and unseen data. Our evaluation framework captures both objectives through a balanced 2:2 design:
\begin{itemize}
    \item \textbf{Unlearning Effectiveness:} Forget Accuracy (FA) and Membership Inference Attack success rate (MIA) jointly evaluate how effectively the model has forgotten the target data. 
    \item \textbf{Utility Preservation:} Retain Accuracy (RA) and Test Accuracy (TA) assess the model's utility on retained training data and held-out test data, respectively. These metrics ensure that the unlearning process does not catastrophically degrade the model's predictive capability.
\end{itemize}

The equal-weight averaging in AG thus reflects a principled balance between these dual demands, rather than biasing toward any single aspect. This design choice ensures that methods achieving strong unlearning at the cost of utility degradation (or vice versa) are appropriately penalized, promoting the development of approaches that excel across both dimensions.

In summary, the AG metric provides a comprehensive, balanced, and comparable evaluation framework that aligns with the fundamental goals of machine unlearning research.

\subsection{Visualization of Entropy-Guided Unlearning}
\label{appendix:entropy_viz}

To provide qualitative evidence for the effectiveness of our entropy-guided 
unlearning objective, we visualize the output probability distribution of a 
randomly selected sample from the forget set. The experiment is conducted on 
CIFAR-100 using MobileNetV2 with 2-bit weight quantization. As shown in Figure~\ref{fig:entropy_compare}, the original model produces a 
peaked distribution with high confidence on a single class (left). After 
applying our entropy-guided unlearning, the output transforms into a nearly 
uniform distribution (right), where each class receives approximately equal 
probability. This visualization shows the isolated effect of entropy maximization. In the complete OEU pipeline, joint optimization with $\mathcal{L}_{\text{retain}}$ constrains the entropy increase through shared features, preventing over-unlearning while avoiding incorrect memorization.

\begin{figure}[h]
    \centering
    \includegraphics[width=0.7\linewidth]{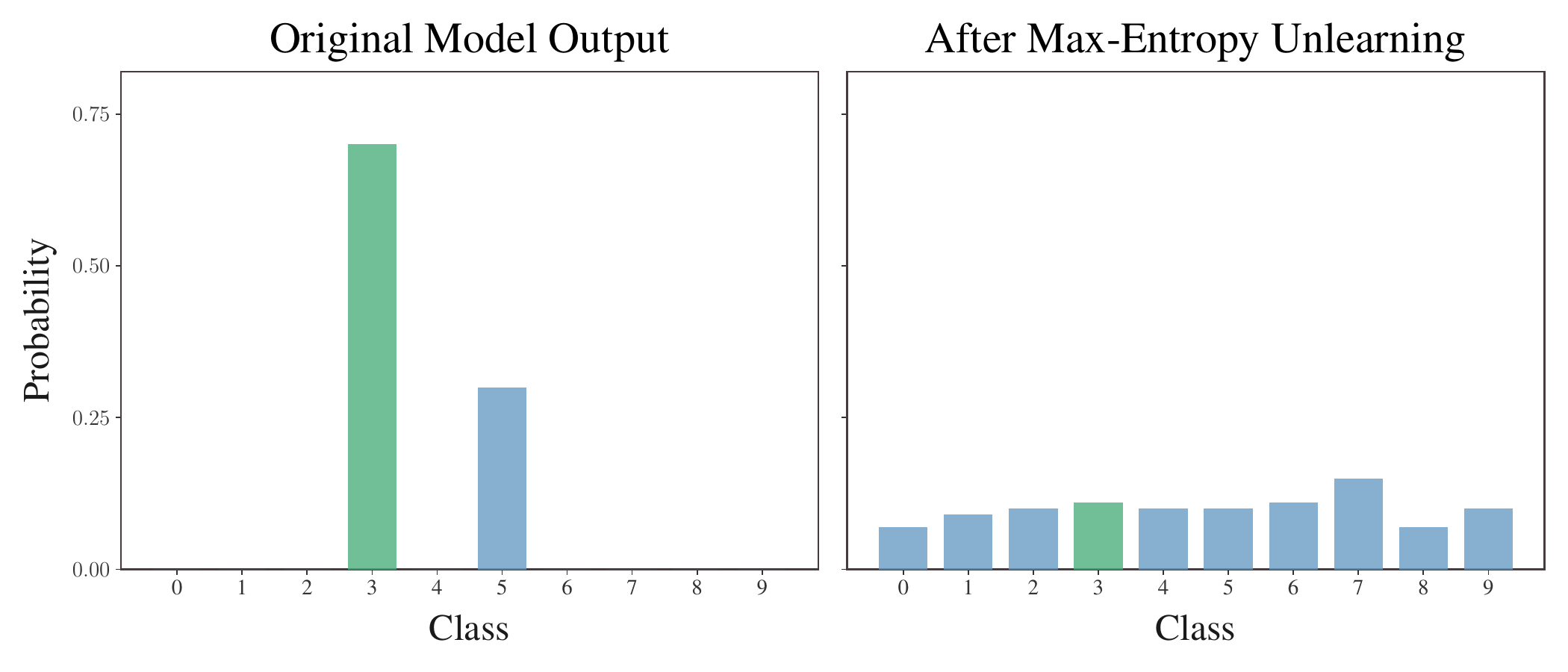}
    \caption{Output probability distribution of a randomly selected forget 
    sample from CIFAR-100 on MobileNetV2 (green: ground-truth class; blue: 
    other classes). Left: output distribution of original model . Right: output distribution after applying max-entropy unlearning.}
    \label{fig:entropy_compare}
\end{figure}

\subsection{Efficiency Analysis}
\label{efficiency}
We compare the runtime of different unlearning methods. As illustrated in Figure~\ref{fig4}, Retrain incurs substantial computational cost, requiring approximately 20 minutes to complete. While GA and IU exhibit minimal time overhead, their poor unlearning performance on quantized models (as evidenced in Table~\ref{tab1}) renders them impractical. OEU maintains comparable efficiency to other approximate methods such as Q-MUL, SalUn, and RL, completing within approximately 3 minutes despite incorporating entropy maximization and gradient orthogonal projection. This is because both operations are computationally lightweight---entropy computation is $\mathcal{O}(K)$ per sample and layer-wise projection is dominated by standard backpropagation. Combined with its superior unlearning effectiveness demonstrated in our experiments, OEU achieves a favorable balance between performance and computational efficiency.

\begin{figure}[htpb]
\centering
\includegraphics[width=0.5\textwidth]{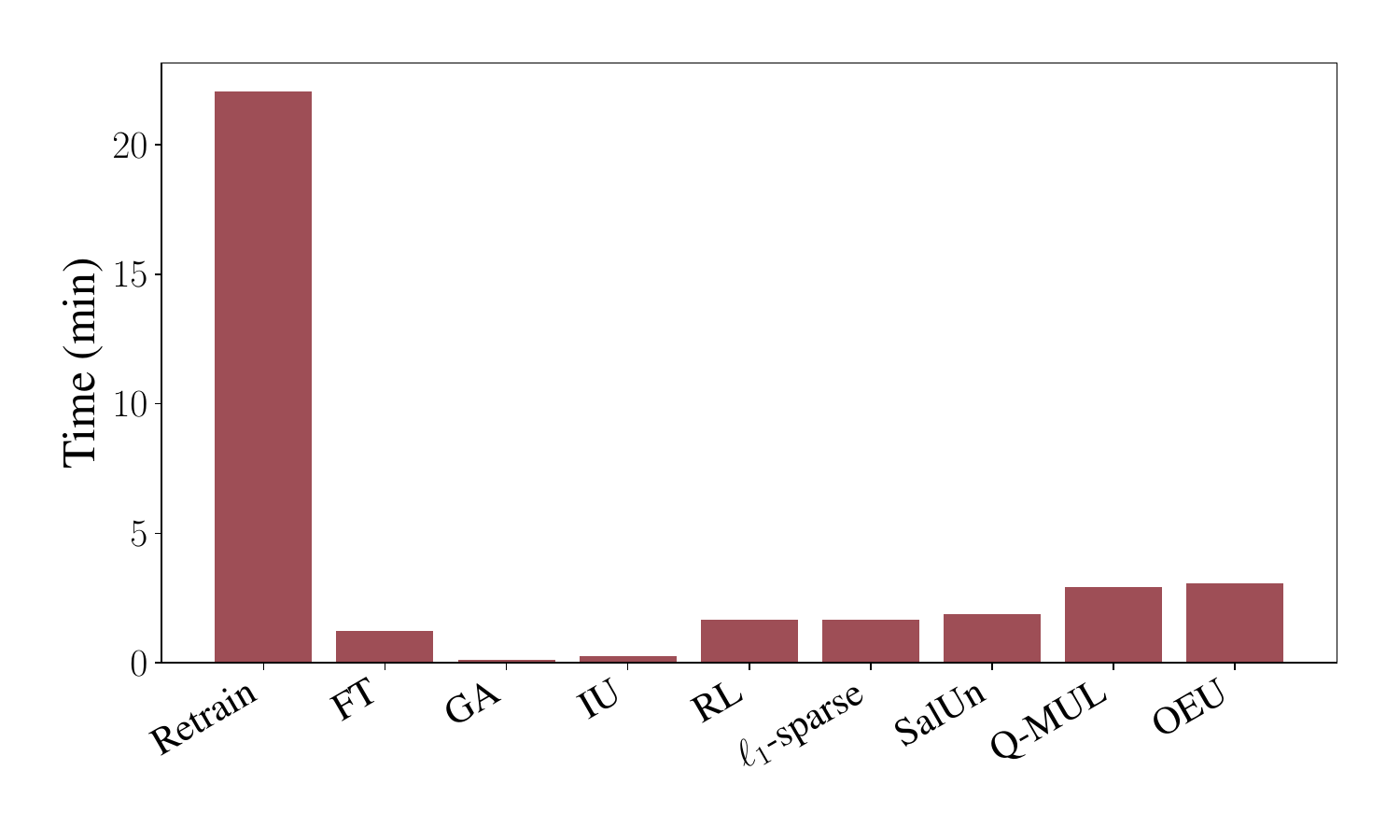}
\caption{Runtime comparison of different unlearning methods on MobileNetV2 with CIFAR-100 under 10\% random data forgetting. OEU achieves comparable efficiency to other approximate methods while delivering superior unlearning performance.}
\label{fig4}
\end{figure}

\subsection{Performance Evaluation on Tiny-Imagenet dataset}
\label{Tiny-Imagenet}
To validate the scalability of OEU on larger datasets, we conduct experiments on Tiny-ImageNet, which contains 200 classes with higher image resolution. As shown in Table~\ref{tab4}, OEU achieves the best average gap of only 5.03\%, significantly outperforming Q-MUL (8.17\%) and $\ell_1$-sparse (9.02\%). Notably, OEU attains an FA of 60.84\%, closely matching the Retrain baseline (62.15\%) with a gap of merely 1.31\%, and achieves the smallest MIA gap of 3.39\%, indicating superior privacy protection. These results demonstrate that OEU maintains its effectiveness on larger and more complex datasets, further validating its practical applicability.

\begin{table*}[h!]
     \caption{Performance of various MU methods for ResNet18 with 4-bit quantization on Tiny-Imagenet. The unlearning scenario is random data forgetting(10\%).  \textbf{Bold} indicates the best performance and \underline{underline} indicates the runner-up.}
  \centering
    \begin{tabular}{c|ccccc}
\toprule   
\multirow{2}{*}{Method}  & \multicolumn{5}{c}{Tiny-Imagenet}  \\
\cmidrule{2-6}
  & FA & RA & TA & MIA & AG$\color{blue}\downarrow$ \\
\midrule
    Retrain &62.15&99.46&62.65&51.51&\cellcolor{gray!20}0 \\
   FT &74.45\textcolor{gray}{(12.30)}&91.99\textcolor{gray}{(7.47)}&56.57\textcolor{gray}{(6.08)}&33.67\textcolor{gray}{(17.84)}&10.92 \cellcolor{gray!20}\\
    GA &92.95\textcolor{gray}{(30.8)}&93.01\textcolor{gray}{(6.45)}&57.19\textcolor{gray}{(5.46)}&11.77\textcolor{gray}{(39.74)}&20.61\cellcolor{gray!20}  \\
    IU &0.45\textcolor{gray}{(61.70)}&0.51\textcolor{gray}{(98.95)}&0.50\textcolor{gray}{(62.15)}&0.45\textcolor{gray}{(51.06)}&68.47\cellcolor{gray!20} \\
    RL &60.05\textcolor{gray}{(2.10)}&82.60\textcolor{gray}{(16.86)}&51.59\textcolor{gray}{(11.06)}&45.13\textcolor{gray}{(6.38)}&9.10\cellcolor{gray!20} \\
    $\ell_1$-sparse &60.89\textcolor{gray}{(1.26)}&75.44\textcolor{gray}{(24.02)}&56.41\textcolor{gray}{(6.24)}&46.94\textcolor{gray}{(4.57)}&9.02\cellcolor{gray!20} \\
    SalUn &65.28\textcolor{gray}{(3.13)}&81.70\textcolor{gray}{(17.76)}&51.77\textcolor{gray}{(10.88)}&37.08\textcolor{gray}{(14.43)}&11.55\cellcolor{gray!20} \\
    Q-MUL &72.10\textcolor{gray}{(9.95)}&91.99\textcolor{gray}{(7.47)}&56.53\textcolor{gray}{(6.32)}&42.56\textcolor{gray}{(8.95)}&\textbf{8.17}\cellcolor{gray!20} \\
    OEU &60.84\textcolor{gray}{(1.31)} &92.69\textcolor{gray}{(6.77)} &54.01\textcolor{gray}{(8.64)} &48.12\textcolor{gray}{(3.39)} &\textbf{5.03}\cellcolor{gray!20} \\
    \bottomrule
    \end{tabular}%

  \label{tab4}
\end{table*}

\subsection{Performance Evaluation under Class-wise Forgetting}
\label{classwise}
We further evaluate OEU under the class-wise forgetting scenario, where all samples from a specific class are removed. As shown in Table~\ref{tab5}, on CIFAR-10, most approximate methods achieve near-optimal performance, with $\ell_1$-sparse (0.08\%), SalUn (0.10\%), Q-MUL (0.11\%), and OEU (0.12\%) all attaining average gaps below 0.15\%. This is because class-wise forgetting on CIFAR-10 (10 classes) is a relatively simple task where multiple methods can effectively eliminate the target class. However, on the more challenging CIFAR-100 dataset with 100 classes (Table~\ref{tab6}), OEU demonstrates clear advantages, achieving the best average gap of 0.21\%, outperforming Q-MUL (0.32\%) and $\ell_1$-sparse (0.55\%). OEU perfectly forgets the target class (FA=0.00\%) while maintaining minimal degradation on RA (0.34\% gap) and TA (0.51\% gap). These results indicate that OEU scales effectively to more complex class-wise forgetting scenarios.

\begin{table*}[h!]
\caption{Performance of various MU methods for ResNet18 with 4-bit quantization on CIFAR-10. The unlearning scenario is class-wise forgetting (one class is forgotten.). \textbf{Bold} indicates the best performance and \underline{underline} indicates the runner-up.}
  \centering
    \begin{tabular}{c|ccccc}
\toprule
\multirow{2}{*}{Method}  & \multicolumn{5}{c}{CIFAR-10}  \\
\cmidrule{2-6}
  & FA & RA & TA & MIA & AG$\color{blue}\downarrow$ \\
\midrule
Retrain  & 0.00 & 99.99 & 93.34 & 100.0 &\cellcolor{gray!20}0 \\
FT       & 62.40\textcolor{gray}{(62.40)} & 99.95\textcolor{gray}{(0.04)} & 93.59\textcolor{gray}{(0.25)} & 94.67\textcolor{gray}{(5.33)} &17.02\cellcolor{gray!20} \\
GA       & 11.20\textcolor{gray}{(11.20)} & 92.28\textcolor{gray}{(7.71)} & 85.49\textcolor{gray}{(7.85)} & 92.13\textcolor{gray}{(7.87)} &8.66\cellcolor{gray!20} \\
IU       & 0.00\textcolor{gray}{(0.00)} & 29.75\textcolor{gray}{(70.24)} & 28.92\textcolor{gray}{(64.42)} & 100.0\textcolor{gray}{(0.00)} &33.67\cellcolor{gray!20} \\
RL       & 0.00\textcolor{gray}{(0.00)} & 99.87\textcolor{gray}{(0.12)} & 93.74\textcolor{gray}{(0.40)} & 100.0\textcolor{gray}{(0.00)} &0.13\cellcolor{gray!20} \\
$\ell_1$-sparse & 0.00\textcolor{gray}{(0.00)} & 99.85\textcolor{gray}{(0.14)} & 93.51\textcolor{gray}{(0.17)} & 100.0\textcolor{gray}{(0.00)} &\textbf{0.08}\cellcolor{gray!20} \\
SalUn    & 0.04\textcolor{gray}{(0.04)} & 99.84\textcolor{gray}{(0.15)} & 93.56\textcolor{gray}{(0.22)} & 100.0\textcolor{gray}{(0.00)} &\underline{0.10}\cellcolor{gray!20} \\
Q-MUL    & 0.00\textcolor{gray}{(0.00)} & 99.69\textcolor{gray}{(0.30)} & 93.20\textcolor{gray}{(0.13)} & 100.0\textcolor{gray}{(0.00)} &0.11\cellcolor{gray!20} \\
OEU &0.00\textcolor{gray}{(0.00)} &99.82\textcolor{gray}{(0.17)} &93.64\textcolor{gray}{(0.30)} &100.0\textcolor{gray}{(0.00)} &0.12\cellcolor{gray!20} \\
\bottomrule
\end{tabular}%

\label{tab5}
\end{table*}

\begin{table*}[h!]
\caption{Performance of various MU methods for ResNet18 with 4-bit quantization on CIFAR-100. The unlearning scenario is class-wise forgetting (one class is forgotten). \textbf{Bold} indicates the best performance and \underline{underline} indicates the runner-up.}
  \centering
    \begin{tabular}{c|ccccc}
\toprule
\multirow{2}{*}{Method}  & \multicolumn{5}{c}{CIFAR-100}  \\
\cmidrule{2-6}
  & FA & RA & TA & MIA & AG$\color{blue}\downarrow$ \\
\midrule
Retrain  & 0.00 & 99.96 & 70.63 & 100.0 &\cellcolor{gray!20}0 \\
FT       & 7.11\textcolor{gray}{(7.11)} & 99.79\textcolor{gray}{(0.27)} & 71.10\textcolor{gray}{(0.47)} & 100.0\textcolor{gray}{(0.00)} &1.96\cellcolor{gray!20} \\
GA       & 37.56\textcolor{gray}{(37.56)} & 88.69\textcolor{gray}{(11.27)} & 61.75\textcolor{gray}{(8.88)} & 79.78\textcolor{gray}{(20.22)} &19.48\cellcolor{gray!20} \\
IU       & 0.00\textcolor{gray}{(0.00)} & 72.25\textcolor{gray}{(17.71)} & 51.22\textcolor{gray}{(19.41)} & 100.0\textcolor{gray}{(0.00)} &9.28\cellcolor{gray!20} \\
RL       & 0.89\textcolor{gray}{(0.89)} & 99.97\textcolor{gray}{(0.01)} & 72.78\textcolor{gray}{(2.15)} & 100.0\textcolor{gray}{(0.00)} &0.76\cellcolor{gray!20} \\
$\ell_1$-sparse & 0.00\textcolor{gray}{(0.00)} & 98.33\textcolor{gray}{(1.63)} & 70.06\textcolor{gray}{(0.57)} & 100.0\textcolor{gray}{(0.00)} &0.55\cellcolor{gray!20} \\
SalUn    & 1.33\textcolor{gray}{(1.33)} & 99.92\textcolor{gray}{(0.04)} & 73.12\textcolor{gray}{(2.49)} & 100.0\textcolor{gray}{(0.00)} &0.97\cellcolor{gray!20} \\
Q-MUL    & 0.00\textcolor{gray}{(0.00)} & 99.50\textcolor{gray}{(0.46)} & 69.79\textcolor{gray}{(0.84)} & 100.0\textcolor{gray}{(0.00)} &\underline{0.32}\cellcolor{gray!20} \\
OEU &0.00\textcolor{gray}{(0.00)} &99.62\textcolor{gray}{(0.34)} &70.12\textcolor{gray}{(0.51)} &100.0\textcolor{gray}{(0.00)} &\textbf{0.21}\cellcolor{gray!20} \\
\bottomrule
\end{tabular}%

\label{tab6}
\end{table*}

\subsection{Hyperparameter Settings and Training Details}
\label{details}
We follow the experimental settings of SalUn and Q-MUL for baseline methods, and all experiments use the SGD optimizer on a single NVIDIA RTX 4090 GPU. For FT and RL, we train for 10 epochs with learning rates searched within [1e-3, 1e-1]. For GA, we train for 5 epochs with learning rates in [1e-5, 1e-3]. For IU, we search the parameter $\alpha$ associated with the Woodfisher Hessian inverse approximation within [1, 20]. For $\ell_1$-sparse, we search the parameter $\gamma$ within [1e-6, 1e-4] and learning rates within [1e-3, 1e-1]. For SalUn, we train for 10 epochs with learning rates in [5e-4, 5e-2] and sparsity ratios in [0.1, 0.9]. For Q-MUL, we train for 10 epochs with learning rates in the range [1e-3, 1e-1]. For our proposed OEU, we train for 10 epochs with learning rates searched within [1e-2, 1e-1] and a batch size of 256, with the orthogonality strength $\alpha$ set to 1.0 by default across all datasets and forgetting scenarios.

\subsection{Theoretical Analysis of Entropy-Guided Unlearning}
\label{sec:entropy_proofs}

This section provides theoretical justifications for entropy-guided unlearning by analyzing its properties compared to label manipulation methods.

\subsubsection{Problem Setup}

Consider a $K$-class classification problem. For a forget sample $(x, y)$ with true label $y \in \{1, \ldots, K\}$, the model produces a prediction distribution $p_\theta(\cdot|x) = \text{softmax}(f_\theta(x))$. We define the uniform distribution $\mathcal{U}(k) = \frac{1}{K}$ as the reference distribution representing the ideal ``forgotten'' state, where the model has no knowledge about the sample.

\subsubsection{Analysis of Label Manipulation Methods}

Label manipulation methods assign incorrect labels to forget samples and train the model via cross-entropy loss.

\begin{definition}[Random Labels]
For a forget sample $(x, y)$, Random Labels assigns a random incorrect label $\tilde{y} \sim \text{Uniform}(\{1,\ldots,K\} \setminus \{y\})$ and minimizes:
\begin{equation}
    \mathcal{L}_{RL}(\theta; x, \tilde{y}) = -\log p_\theta(\tilde{y}|x)
\end{equation}
\end{definition}

\begin{definition}[Similar Labels]
Similar Labels assigns the most similar incorrect label $\tilde{y} = \arg\max_{k \neq y} \text{sim}(k, y)$ and minimizes the same cross-entropy loss.
\end{definition}

\begin{proof}[Proof of Theorem~\ref{thm:label_bias}]
The cross-entropy loss $\mathcal{L}_{RL}(\theta; x, \tilde{y}) = -\log p_\theta(\tilde{y}|x)$ is minimized when $p_\theta(\tilde{y}|x) = 1$. Under the constraint $\sum_{k=1}^{K} p_\theta(k|x) = 1$, the optimal distribution is:
\begin{equation}
    p^*(k|x) = \mathbf{1}[k = \tilde{y}] = 
    \begin{cases} 
        1 & k = \tilde{y} \\ 
        0 & k \neq \tilde{y}
    \end{cases}
\end{equation}

The entropy of this distribution is:
\begin{equation}
    H(p^*) = -\sum_{k=1}^{K} p^*(k|x) \log p^*(k|x) = -1 \cdot \log 1 = 0
\end{equation}

The KL divergence from the uniform distribution is:
\begin{align}
    D_{KL}(p^* \| \mathcal{U}) &= \sum_{k=1}^{K} p^*(k) \log \frac{p^*(k)}{\mathcal{U}(k)} \nonumber \\
    &= 1 \cdot \log \frac{1}{1/K} = \log K
\end{align}
\end{proof}

\begin{remark}
The optimal solution $p^*(k|x) = \mathbf{1}[k = \tilde{y}]$ indicates that label manipulation methods train the model to memorize incorrect labels with full confidence. This does not achieve genuine forgetting but rather replaces correct memorization with incorrect memorization.
\end{remark}

\subsubsection{Analysis of Entropy-Guided Unlearning}

Entropy-guided unlearning maximizes the prediction entropy on forget samples.

\begin{definition}[Entropy-Guided Unlearning]
For a forget sample $x$, entropy-guided unlearning minimizes the negative entropy:
\begin{equation}
    \mathcal{L}_{ME}(\theta; x) = -H(p_\theta(\cdot|x)) = \sum_{k=1}^{K} p_\theta(k|x) \log p_\theta(k|x)
\end{equation}
\end{definition}

\begin{proof}[Proof of Theorem~\ref{thm:entropy_unbiased}]
We seek to maximize entropy $H(p) = -\sum_{k=1}^{K} p_k \log p_k$ subject to $\sum_{k=1}^{K} p_k = 1$ and $p_k \geq 0$. Using the method of Lagrange multipliers:
\begin{equation}
    \mathcal{L}(p, \lambda) = -\sum_{k=1}^{K} p_k \log p_k + \lambda \left(\sum_{k=1}^{K} p_k - 1\right)
\end{equation}

Taking the derivative with respect to $p_k$ and setting it to zero:
\begin{equation}
    \frac{\partial \mathcal{L}}{\partial p_k} = -\log p_k - 1 + \lambda = 0
\end{equation}

This yields $p_k = e^{\lambda - 1}$, which is constant for all $k$. Applying the constraint $\sum_{k=1}^{K} p_k = 1$:
\begin{equation}
    K \cdot e^{\lambda - 1} = 1 \implies e^{\lambda - 1} = \frac{1}{K}
\end{equation}

Therefore, the optimal distribution is:
\begin{equation}
    p^*(k|x) = \frac{1}{K}, \quad \forall k \in \{1, \ldots, K\}
\end{equation}

The entropy of the uniform distribution is:
\begin{equation}
    H(p^*) = -\sum_{k=1}^{K} \frac{1}{K} \log \frac{1}{K} = \log K
\end{equation}

The KL divergence from the uniform distribution is:
\begin{equation}
    D_{KL}(p^* \| \mathcal{U}) = \sum_{k=1}^{K} \frac{1}{K} \log \frac{1/K}{1/K} = 0
\end{equation}
\end{proof}

\subsubsection{Expected Bias Analysis}

We further analyze the expected behavior of Random Labels when considering the randomness in label assignment.

\begin{proposition}[Expected Target Distribution]
\label{prop:expected_dist}
Let $\tilde{y}$ be sampled uniformly from $\{1,\ldots,K\} \setminus \{y\}$. The expected target distribution induced by Random Labels is:
\begin{equation}
    \bar{p}(k|x) = \mathbb{E}_{\tilde{y}}[\mathbf{1}[k = \tilde{y}]] = 
    \begin{cases} 
        0 & k = y \\ 
        \frac{1}{K-1} & k \neq y 
    \end{cases}
\end{equation}
\end{proposition}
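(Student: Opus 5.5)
The plan is to compute the expectation directly from the definition of $\tilde{y}$, using linearity of expectation and the fact that $\tilde{y}$ is uniform on a set of size $K-1$. For each fixed class $k$, the indicator $\mathbf{1}[k=\tilde{y}]$ is a Bernoulli random variable. Its expectation is therefore $\Pr(\tilde{y}=k)$, so the whole proof reduces to evaluating this probability in two cases.

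\emph{Case $k=y$.} The support of $\tilde{y}$ is $\{1,\ldots,K\}\setminus\{y\}$, so the event $\{\tilde{y}=y\}$ is empty. Hence $\bar{p}(y|x)=0$.

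\emph{Case $k\neq y$.} Here $k$ belongs to the support, which has exactly $K-1$ elements. Since each element has equal mass, $\Pr(\tilde{y}=k)=\frac{1}{K-1}$.

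As a consistency check, I will verify that the resulting vector is a valid distribution. It satisfies $0+(K-1)\cdot\frac{1}{K-1}=1$, which confirms that $\bar{p}(\cdot|x)$ is a probability distribution. This matters because the subsequent discussion will likely compare it with $\mathcal{U}$, for example via $D_{KL}(\bar p\|\mathcal{U})=\log\frac{K}{K-1}>0$.

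There is no real obstacle; the only care needed is the implicit assumption $K\ge 2$, so that the support is nonempty and $\frac{1}{K-1}$ is defined. I will state this assumption explicitly at the outset. I will also note that the expectation is taken over the label randomness only, with $x$ and $y$ held fixed. This avoids confusing the averaged target $\bar p$ with the per-sample optimum $p^*$ of Theorem~\ref{thm:label_bias}, which is a point mass for each realized $\tilde y$.
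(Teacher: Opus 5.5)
Your proof is correct and follows the paper's argument: both reduce $\mathbb{E}_{\tilde y}[\mathbf{1}[k=\tilde y]]$ to $\Pr(\tilde y = k)$ and split into the case $k=y$ (the event is empty because $y$ is excluded from the support) and the case $k\neq y$ (uniform mass $\frac{1}{K-1}$ over the $K-1$ incorrect labels). Your explicit assumption $K\ge 2$ and the normalization check are harmless additions that the paper leaves implicit; it only invokes $K\ge 2$ in the following proposition.
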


\begin{proof}
For $k = y$, since $\tilde{y} \neq y$ by construction, we have $\mathbf{1}[k = \tilde{y}] = 0$ with probability 1.

For $k \neq y$, the probability that $\tilde{y} = k$ is $\frac{1}{K-1}$ since $\tilde{y}$ is sampled uniformly from the $K-1$ incorrect labels.
\end{proof}

\begin{proposition}[Bias of Expected Distribution]
\label{prop:bias_kl}
The expected target distribution $\bar{p}$ has non-zero divergence from the uniform distribution:
\begin{equation}
    D_{KL}(\bar{p} \| \mathcal{U}) = \log \frac{K}{K-1} > 0
\end{equation}
\end{proposition}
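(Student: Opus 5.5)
The plan is to compute the divergence directly from the definition, using the explicit form of $\bar{p}$ from Proposition~\ref{prop:expected_dist}. Write
\begin{equation*}
D_{KL}(\bar{p}\,\|\,\mathcal{U}) = \sum_{k=1}^{K} \bar{p}(k|x)\log\frac{\bar{p}(k|x)}{1/K}.
\end{equation*}
I will split the sum into the term $k=y$ and the $K-1$ terms $k\neq y$.

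For the term $k=y$ we have $\bar{p}(y|x)=0$. Under the standard convention $0\log 0 = 0$, which is justified by $\lim_{t\to 0^+} t\log t = 0$, this term contributes nothing. I will state this convention explicitly, since it is the only point where any care is needed.

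Each of the remaining $K-1$ terms equals $\frac{1}{K-1}\log\frac{1/(K-1)}{1/K}$, which is $\frac{1}{K-1}\log\frac{K}{K-1}$. Summing the $K-1$ identical terms gives exactly $\log\frac{K}{K-1}$.

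Strict positivity then follows from $K\ge 2$, which is needed for a wrong label to exist. In that case $\frac{K}{K-1}>1$, so the logarithm is strictly positive. As a cross-check, this agrees with Gibbs' inequality: the divergence is zero only when $\bar{p}=\mathcal{U}$, and that fails here because $\bar{p}(y|x)=0\neq 1/K$. There is no genuine obstacle in the argument; the only subtle step is the $0\log 0$ convention, and it is routine.
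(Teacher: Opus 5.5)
Your proposal is correct and matches the paper's proof essentially step for step. Both split off the $k=y$ term, which vanishes by the $0\log 0 = 0$ convention, then sum the $K-1$ identical terms $\frac{1}{K-1}\log\frac{K}{K-1}$ to get $\log\frac{K}{K-1}$, and use $K\ge 2$ for strict positivity; your Gibbs'-inequality remark is a harmless extra cross-check the paper does not include.
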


\begin{proof}
Computing the KL divergence:
\begin{align}
    D_{KL}(\bar{p} \| \mathcal{U}) &= \sum_{k=1}^{K} \bar{p}(k) \log \frac{\bar{p}(k)}{\mathcal{U}(k)} \nonumber \\
    &= \bar{p}(y) \log \frac{\bar{p}(y)}{\mathcal{U}(y)} + \sum_{k \neq y} \bar{p}(k) \log \frac{\bar{p}(k)}{\mathcal{U}(k)}
\end{align}

For $k = y$: $\bar{p}(y) = 0$, and by convention $0 \cdot \log 0 = 0$, so this term contributes 0.

For $k \neq y$ (total of $K-1$ terms):
\begin{align}
    \sum_{k \neq y} \frac{1}{K-1} \log \frac{1/(K-1)}{1/K} &= (K-1) \cdot \frac{1}{K-1} \cdot \log \frac{K}{K-1} \nonumber \\
    &= \log \frac{K}{K-1}
\end{align}

Since $K \geq 2$, we have $\frac{K}{K-1} > 1$, thus $D_{KL}(\bar{p} \| \mathcal{U}) > 0$.
\end{proof}

\begin{remark}
Proposition~\ref{prop:bias_kl} reveals that even in expectation, Random Labels introduces a systematic bias: the true class $y$ is suppressed to probability 0 rather than the uniform $\frac{1}{K}$. This asymmetry may be exploited by membership inference attacks to distinguish forgotten samples from unseen samples.
\end{remark}

\subsection{Theoretical Analysis of Gradient Orthogonal
Projection}
\label{sec:proofs}

This section provides detailed proofs for the theoretical results presented in Section~\ref{sec:method}. We establish that Gradient Orthogonal Projection (GOP) guarantees conflict-free optimization under first-order approximation.

\subsubsection{Preliminaries and Assumptions}

Consider a quantized neural network with parameters $\theta \in \mathbb{R}^d$. Let $\mathcal{L}_f(\theta)$ and $\mathcal{L}_r(\theta)$ denote the forgetting and retain losses, respectively, with corresponding gradients:
\begin{equation}
    g_f = \nabla_\theta \mathcal{L}_f(\theta), \quad g_r = \nabla_\theta \mathcal{L}_r(\theta)
\end{equation}

\begin{assumption}
\label{assum:nonzero}
The retain gradient is non-vanishing, i.e., $g_r \neq \mathbf{0}$. This assumption holds in practice when the retain set contains sufficient samples for meaningful gradient computation.
\end{assumption}

The projected forgetting gradient is defined as:
\begin{equation}
\label{eq:projection_appendix}
    g_f^{\perp} = g_f - \frac{\langle g_f, g_r \rangle}{\|g_r\|^2} g_r
\end{equation}

\subsubsection{Proof of Theorem~\ref{thm:retain_main} (Retain Preservation)}

We first establish a key lemma showing that the projected gradient is orthogonal to the retain gradient.

\begin{lemma}[Orthogonality]
\label{lemma:ortho}
The projected gradient $g_f^{\perp}$ satisfies $\langle g_f^{\perp}, g_r \rangle = 0$.
\end{lemma}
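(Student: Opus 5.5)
The plan is a direct computation from the definition \eqref{eq:projection_appendix}, using only bilinearity and symmetry of the Euclidean inner product on $\mathbb{R}^d$. Assumption~\ref{assum:nonzero} ($g_r \neq \mathbf{0}$) guarantees $\|g_r\|^2 > 0$, so the scalar coefficient
\begin{equation*}
c := \frac{\langle g_f, g_r \rangle}{\|g_r\|^2}
\end{equation*}
is well defined. With this notation, $g_f^{\perp} = g_f - c\, g_r$.

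Next I expand the inner product with $g_r$ by linearity in the first argument:
\begin{equation*}
\langle g_f^{\perp}, g_r \rangle = \langle g_f, g_r \rangle - c\,\langle g_r, g_r \rangle = \langle g_f, g_r \rangle - c\,\|g_r\|^2 .
\end{equation*}
Substituting the value of $c$, the factor $\|g_r\|^2$ cancels and the two terms are equal, so the difference is $0$.

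There is no real obstacle. The only point requiring care is the division by $\|g_r\|^2$, which is exactly what Assumption~\ref{assum:nonzero} secures. If $g_r$ were zero, the projection would be undefined (though one could set $g_f^\perp = g_f$ and orthogonality would hold trivially).

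I will also note a point for the subsequent use in Theorem~\ref{thm:retain_main}. The same computation holds layer-wise: for the normalized vectors in \eqref{eq:layer_proj}, one gets $\langle g_f^{\perp(l)}, \tilde g_r^{(l)}\rangle = \langle \tilde g_f^{(l)}, \tilde g_r^{(l)}\rangle\,(1-\|\tilde g_r^{(l)}\|^2)$. This vanishes only when $\epsilon = 0$, so the lemma is exact for the global projection as stated and only approximate for the $\epsilon$-regularized layer-wise variant.
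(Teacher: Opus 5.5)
Your proof is correct and is the same direct bilinearity computation the paper gives; you also rightly note that Assumption~\ref{assum:nonzero} ($g_r \neq \mathbf{0}$) is what makes the coefficient well defined. Your side remark on the $\epsilon$-regularized layer-wise projection is also right, with one caveat: the residual $\langle \tilde g_f^{(l)}, \tilde g_r^{(l)}\rangle(1-\|\tilde g_r^{(l)}\|^2)$ vanishes not only when $\epsilon = 0$ but also when the layer gradients are already orthogonal or when $g_r^{(l)} = \mathbf{0}$.
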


\begin{proof}
By the definition of $g_f^{\perp}$ in Eq.~\eqref{eq:projection_appendix}:
\begin{align}
    \langle g_f^{\perp}, g_r \rangle &= \left\langle g_f - \frac{\langle g_f, g_r \rangle}{\|g_r\|^2} g_r, \, g_r \right\rangle \nonumber \\
    &= \langle g_f, g_r \rangle - \frac{\langle g_f, g_r \rangle}{\|g_r\|^2} \langle g_r, g_r \rangle \nonumber \\
    &= \langle g_f, g_r \rangle - \frac{\langle g_f, g_r \rangle}{\|g_r\|^2} \cdot \|g_r\|^2 \nonumber \\
    &= \langle g_f, g_r \rangle - \langle g_f, g_r \rangle = 0
\end{align}
\end{proof}

\begin{proof}[Proof of Theorem~\ref{thm:retain_main}]
The orthogonality $\langle g_f^{\perp}, g_r \rangle = 0$ follows directly from Lemma~\ref{lemma:ortho}. To show retain preservation, we apply Taylor's first-order expansion of $\mathcal{L}_r$ at $\theta$:
\begin{align}
    \mathcal{L}_r(\theta') &= \mathcal{L}_r(\theta - \eta g_f^{\perp}) \nonumber \\
    &\approx \mathcal{L}_r(\theta) + \langle \nabla_\theta \mathcal{L}_r(\theta), -\eta g_f^{\perp} \rangle \nonumber \\
    &= \mathcal{L}_r(\theta) - \eta \langle g_r, g_f^{\perp} \rangle \nonumber \\
    &= \mathcal{L}_r(\theta) - \eta \cdot 0 = \mathcal{L}_r(\theta)
\end{align}
Thus, the retain loss remains unchanged under first-order approximation.
\end{proof}

\subsubsection{Proof of Theorem~\ref{thm:forget_main} (Forgetting Effectiveness)}

\begin{proof}
We prove the two claims sequentially.

\textbf{Claim 1: $g_f^{\perp} \neq \mathbf{0}$ when $g_f$ is not collinear with $g_r$.}

Suppose for contradiction that $g_f^{\perp} = \mathbf{0}$. By Eq.~\eqref{eq:projection_appendix}:
\begin{equation}
    g_f = \frac{\langle g_f, g_r \rangle}{\|g_r\|^2} g_r = \lambda g_r
\end{equation}
where $\lambda = \frac{\langle g_f, g_r \rangle}{\|g_r\|^2}$. This implies $g_f$ is collinear with $g_r$, contradicting the assumption. Therefore, $g_f^{\perp} \neq \mathbf{0}$.

\textbf{Claim 2: $\langle g_f, g_f^{\perp} \rangle = \|g_f\|^2 \sin^2\phi > 0$.}

Expanding the inner product:
\begin{align}
    \langle g_f, g_f^{\perp} \rangle &= \left\langle g_f, g_f - \frac{\langle g_f, g_r \rangle}{\|g_r\|^2} g_r \right\rangle \nonumber \\
    &= \|g_f\|^2 - \frac{\langle g_f, g_r \rangle^2}{\|g_r\|^2}
\end{align}

Let $\phi$ denote the angle between $g_f$ and $g_r$. By the definition of cosine:
\begin{equation}
    \cos\phi = \frac{\langle g_f, g_r \rangle}{\|g_f\| \|g_r\|}
\end{equation}

Substituting:
\begin{align}
    \langle g_f, g_f^{\perp} \rangle &= \|g_f\|^2 - \frac{\|g_f\|^2 \|g_r\|^2 \cos^2\phi}{\|g_r\|^2} \nonumber \\
    &= \|g_f\|^2 (1 - \cos^2\phi) \nonumber \\
    &= \|g_f\|^2 \sin^2\phi
\end{align}

Since $g_f$ and $g_r$ are non-collinear, $\phi \notin \{0, \pi\}$, implying $\sin\phi \neq 0$. Combined with $g_f \neq \mathbf{0}$, we have $\langle g_f, g_f^{\perp} \rangle > 0$.

\textbf{Descent guarantee.} Applying Taylor's expansion to $\mathcal{L}_f$:
\begin{equation}
    \mathcal{L}_f(\theta - \eta g_f^{\perp}) \approx \mathcal{L}_f(\theta) - \eta \langle g_f, g_f^{\perp} \rangle < \mathcal{L}_f(\theta)
\end{equation}
for $\eta > 0$, since $\langle g_f, g_f^{\perp} \rangle > 0$.
\end{proof}

\subsection{Performance evaluation on MobileNetV2}
\label{mobilenet}
To further validate the generalizability of OEU, we conduct experiments on MobileNetV2 with 2-bit weight quantization across CIFAR-10, CIFAR-100, and SVHN datasets under random data forgetting scenarios. As shown in Tables~\ref{tab7},~\ref{tab8}, and~\ref{tab9}, OEU consistently achieves the lowest average gap across all datasets and forgetting ratios, demonstrating its effectiveness on lightweight architectures with aggressive quantization.
\begin{table*}[b]
  \caption{Performance of various MU methods for MobileNetV2 with activations kept at full precision and weights quantized to 2 bits on
CIFAR-10. The unlearning scenario is random data forgetting. \textbf{Bold} indicates the best performance and \underline{underline} indicates the runner-up. A performance gap against Retrain is provided in \textcolor{gray}{(•)}.}
  \centering
    \begin{tabular}{c|ccccc}
\toprule  
\multirow{2}[4]{*}{Method}  & \multicolumn{5}{c}{CIFAR-10} \\
\cmidrule{2-6}
   & FA & RA & TA & MIA & AG$\color{blue}\downarrow$ \\
    \midrule
    \multicolumn{6}{c}{\cellcolor{gray!20}The proportion of forgotten data samples to all samples is 10\%}\\
    \midrule
    Retrain  & 85.97 & 94.39 & 85.49 & 18.60 &\cellcolor{gray!20} 0  \\
    FT  & 93.64 \textcolor{gray}{(7.67)} & 95.42 \textcolor{gray}{(1.03)} & 87.37 \textcolor{gray}{(1.88)} & 9.62 \textcolor{gray}{(8.98)} &\cellcolor{gray!20} 4.89   \\
   GA  & 93.97 \textcolor{gray}{(8.00)} & 94.87 \textcolor{gray}{(0.48)} & 87.21 \textcolor{gray}{(1.72)} & 8.49 \textcolor{gray}{(10.11)} &\cellcolor{gray!20} 5.08  \\
    IU &  13.13\textcolor{gray}{(72.84)} & 14.28\textcolor{gray}{(80.11)}  & 13.97\textcolor{gray}{(71.52)}  & 84.51\textcolor{gray}{(66.91)}  &\cellcolor{gray!20} 75.85  \\
    RL &  85.82 \textcolor{gray}{(0.15)}& 87.46 \textcolor{gray}{(6.93)}& 84.05 \textcolor{gray}{(1.44)}& 17.27 \textcolor{gray}{(1.33)}&\cellcolor{gray!20} \underline{2.46} \\
    $\ell_1$-sparse & 93.17 \textcolor{gray}{(7.20)} & 95.15 \textcolor{gray}{(0.76)} & 87.66 \textcolor{gray}{(2.17)} & 9.89 \textcolor{gray}{(8.71)} &\cellcolor{gray!20} 4.71 \\
    SalUn  & 90.87\textcolor{gray}{(4.90)} & 92.08\textcolor{gray}{(2.31)} & 86.63\textcolor{gray}{(1.14)} & 15.71\textcolor{gray}{(2.89)} &\cellcolor{gray!20} 2.81 \\
    Q-MUL  & 90.71\textcolor{gray}{(4.74)} & 93.60\textcolor{gray}{(0.79)} & 87.51\textcolor{gray}{(2.02)} & 15.00\textcolor{gray}{(3.60)} &\cellcolor{gray!20} 2.79 \\
        OEU &88.00\textcolor{gray}{(2.03)} &91.86\textcolor{gray}{(2.53)} &85.98\textcolor{gray}{(0.49)} &16.78\textcolor{gray}{(1.82)} &\textbf{1.72}\cellcolor{gray!20} \\
    \midrule
    \multirow{2}[4]{*}{Method}  & \multicolumn{5}{c}{CIFAR-10} \\
\cmidrule{2-6}
   & FA & RA & TA & MIA & AG$\color{blue}\downarrow$ \\
    \midrule
    \multicolumn{6}{c}{\cellcolor{gray!20}The proportion of forgotten data samples to all samples is 30\%}\\
    \midrule
    Retrain  & 84.87 & 93.87 & 84.19 & 19.90 &\cellcolor{gray!20} 0   \\
    FT  & 91.53 \textcolor{gray}{(6.66)} & 93.17 \textcolor{gray}{(0.70)} & 86.21 \textcolor{gray}{(2.02)} & 12.07 \textcolor{gray}{(7.83)} &\cellcolor{gray!20} 4.30 \\
   GA  & 89.16 \textcolor{gray}{(4.29)} & 89.40 \textcolor{gray}{(4.47)} & 82.79 \textcolor{gray}{(1.40)} & 15.20 \textcolor{gray}{(4.70)} &\cellcolor{gray!20} 3.72 \\
    IU  & 12.67\textcolor{gray}{(72.20)}  & 13.11\textcolor{gray}{(80.76)}  & 12.65\textcolor{gray}{(71.54)}  & 86.47\textcolor{gray}{(66.57)}  &\cellcolor{gray!20} 72.77 \\
    RL  & 88.04 \textcolor{gray}{(3.17)}& 88.86 \textcolor{gray}{(5.01)}& 84.26 \textcolor{gray}{(0.07)}& 21.81\textcolor{gray}{(1.91)}  &\cellcolor{gray!20} 2.54 \\
    $\ell_1$-sparse & 92.90 \textcolor{gray}{(8.03)} & 94.53 \textcolor{gray}{(0.66)} & 86.80 \textcolor{gray}{(2.61)} & 10.76 \textcolor{gray}{(9.14)} &\cellcolor{gray!20} 5.11 \\
    SalUn  & 95.20\textcolor{gray}{(10.33)} & 96.83\textcolor{gray}{(2.96)} & 90.49\textcolor{gray}{(6.30)} & 16.87\textcolor{gray}{(3.03)} &\cellcolor{gray!20} 5.66 \\
    Q-MUL  & 87.56\textcolor{gray}{(2.69)} & 89.24\textcolor{gray}{(4.63)} & 85.63\textcolor{gray}{(1.44)} & 18.56\textcolor{gray}{(1.34)} &\cellcolor{gray!20} \underline{2.53} \\
    OEU &86.75\textcolor{gray}{(1.88)} &89.93\textcolor{gray}{(3.94)} &82.94\textcolor{gray}{(1.25)} &20.27\textcolor{gray}{(0.37)} &\textbf{1.86}\cellcolor{gray!20} \\
    \midrule
    \multirow{2}[4]{*}{Method}  & \multicolumn{5}{c}{CIFAR-10} \\
\cmidrule{2-6}
   & FA & RA & TA & MIA & AG$\color{blue}\downarrow$ \\
    \midrule
    \multicolumn{6}{c}{\cellcolor{gray!20}The proportion of forgotten data samples to all samples is 50\%}\\
    \midrule
    Retrain  & 82.27 & 94.76 & 82.17 & 22.17 & \cellcolor{gray!20} 0   \\
    FT  & 89.50 \textcolor{gray}{(7.23)} & 91.26 \textcolor{gray}{(3.50)} & 84.32 \textcolor{gray}{(2.15)} & 13.05 \textcolor{gray}{(9.12)} & \cellcolor{gray!20}5.5 \\
   GA  & 75.99 \textcolor{gray}{(6.28)} & 76.07 \textcolor{gray}{(18.69)} & 72.07 \textcolor{gray}{(10.10)} & 24.30 \textcolor{gray}{(2.13)} &\cellcolor{gray!20} 9.3 \\
    IU  & 21.89\textcolor{gray}{(60.38)} & 22.03\textcolor{gray}{(72.73)} & 21.67\textcolor{gray}{(60.50)} & 78.52\textcolor{gray}{(56.35)} &\cellcolor{gray!20} 62.49 \\
    RL & 86.81 \textcolor{gray}{(4.54)} & 87.85 \textcolor{gray}{(6.91)}& 83.70\textcolor{gray}{(1.53)} & 17.98 \textcolor{gray}{(4.19)}&\cellcolor{gray!20} 4.29 \\
    $\ell_1$-sparse & 6.64 \textcolor{gray}{(11.09)} & 95.70 \textcolor{gray}{(0.94)} & 87.11 \textcolor{gray}{(4.94)} & 10.00 \textcolor{gray}{(12.17)} &\cellcolor{gray!20} 7.29  \\
    SalUn  & 87.98\textcolor{gray}{(5.71)} & 88.95\textcolor{gray}{(5.81)} & 84.56\textcolor{gray}{(2.39)} & 15.04\textcolor{gray}{(7.13)} &\cellcolor{gray!20} 5.26 \\
    Q-MUL  & 87.08\textcolor{gray}{(4.81)} & 89.48\textcolor{gray}{(5.28)} & 84.76\textcolor{gray}{(2.59)} & 22.24\textcolor{gray}{(0.07)} &\cellcolor{gray!20} \underline{3.19} \\  
    OEU &87.62\textcolor{gray}{(5.35)} &90.77\textcolor{gray}{(3.99)} &83.37\textcolor{gray}{(1.20)} &21.33\textcolor{gray}{(0.84)} &\textbf{2.85}\cellcolor{gray!20} \\
    \bottomrule
    \end{tabular}%
  \label{tab7}%
\end{table*}


\begin{table*}[b]
 \caption{Performance of various MU methods for MobileNetV2 with activations kept at full precision and weights quantized to 2 bits on
CIFAR-100. The unlearning scenario is random data forgetting. \textbf{Bold} indicates the best performance and \underline{underline} indicates the runner-up. A performance gap against Retrain is provided in \textcolor{gray}{(•)}.}
  \centering
    \begin{tabular}{c|ccccc}
\toprule  
\multirow{2}[4]{*}{Method}  & \multicolumn{5}{c}{CIFAR-100} \\
\cmidrule{2-6}
   & FA & RA & TA & MIA & AG$\color{blue}\downarrow$ \\
    \midrule
    \multicolumn{6}{c}{\cellcolor{gray!20}The proportion of forgotten data samples to all samples is 10\%}\\
    \midrule
    Retrain  & 61.53 & 89.44 & 60.90 & 40.20 &\cellcolor{gray!20} 0  \\
    FT  & 82.89 \textcolor{gray}{(21.36)} & 88.18 \textcolor{gray}{(1.26)} & 63.56 \textcolor{gray}{(2.66)} & 19.11 \textcolor{gray}{(21.09)} &\cellcolor{gray!20} 11.59\\
   GA  & 83.82 \textcolor{gray}{(22.29)} & 85.20 \textcolor{gray}{(4.24)} & 61.80 \textcolor{gray}{(0.90)} & 17.33 \textcolor{gray}{(22.87)} &\cellcolor{gray!20} 12.58\\
    IU &  2.40\textcolor{gray}{(59.13)}  & 2.81\textcolor{gray}{(86.63)}  & 2.76\textcolor{gray}{(58.14)}  & 2.62\textcolor{gray}{(37.58)}  &\cellcolor{gray!20} 60.37 \\
    RL &  81.53 \textcolor{gray}{(20.00)}&  88.41 \textcolor{gray}{(1.03)} & 63.26 \textcolor{gray}{(2.36)} & 26.38 \textcolor{gray}{(13.82)} &\cellcolor{gray!20} 9.30\\
    $\ell_1$-sparse & 81.31 \textcolor{gray}{(19.78)} & 85.97 \textcolor{gray}{(3.47)} & 62.33 \textcolor{gray}{(1.43)} & 19.98 \textcolor{gray}{(20.22)} &\cellcolor{gray!20} 11.23 \\
    SalUn  & 82.22\textcolor{gray}{(20.69)} & 88.15\textcolor{gray}{(1.29)} & 63.26\textcolor{gray}{(2.36)} &  27.38\textcolor{gray}{(12.82)} &\cellcolor{gray!20}9.29 \\
    Q-MUL  &69.96\textcolor{gray}{(8.43)}  &80.25\textcolor{gray}{(9.19)}  &62.29\textcolor{gray}{(1.39)}  &33.15\textcolor{gray}{(7.05)}  &\underline{6.52}\cellcolor{gray!20}   \\
    OEU &69.29\textcolor{gray}{(7.76)} &85.79\textcolor{gray}{(3.65)} &61.72\textcolor{gray}{(0.82)} &39.27\textcolor{gray}{(0.93)} &\textbf{3.29}\cellcolor{gray!20} \\
    \midrule
    \multirow{2}[4]{*}{Method}  & \multicolumn{5}{c}{CIFAR-100} \\
\cmidrule{2-6}
   & FA & RA & TA & MIA & AG$\color{blue}\downarrow$ \\
    \midrule
    \multicolumn{6}{c}{\cellcolor{gray!20}The proportion of forgotten data samples to all samples is 30\%}\\
    \midrule
    Retrain  & 53.90 & 83.38 & 54.97 & 46.46 &\cellcolor{gray!20} 0 \\
    FT  & 79.36 \textcolor{gray}{(25.46)} & 86.34 \textcolor{gray}{(2.96)} & 62.04 \textcolor{gray}{(7.07)} & 22.03 \textcolor{gray}{(24.43)} &\cellcolor{gray!20} 14.98\\
   GA  & 66.02 \textcolor{gray}{(12.12)} & 66.34 \textcolor{gray}{(17.04)} & 51.44 \textcolor{gray}{(3.53)} & 24.67 \textcolor{gray}{(21.79)} &\cellcolor{gray!20} 13.62 \\
    IU  & 6.68\textcolor{gray}{(47.22)}  & 7.30\textcolor{gray}{(76.08)}  & 6.75\textcolor{gray}{(48.22)}  & 94.62\textcolor{gray}{(48.16)}  &\cellcolor{gray!20} 54.92 \\
    RL  & 77.16 \textcolor{gray}{(23.26)}& 83.49\textcolor{gray}{(0.11)} & 61.31 \textcolor{gray}{(6.34)} & 25.43\textcolor{gray}{(21.03)}  &\cellcolor{gray!20} 12.69 \\
    $\ell_1$-sparse & 82.53 \textcolor{gray}{(28.63)} & 88.77 \textcolor{gray}{(5.39)} & 63.30 \textcolor{gray}{(8.33)} & 19.75 \textcolor{gray}{(26.71)} &\cellcolor{gray!20} 17.27 \\
    SalUn  & 76.26\textcolor{gray}{(22.36)} & 83.46\textcolor{gray}{(0.08)} & 61.19\textcolor{gray}{(6.22)} & 26.08\textcolor{gray}{(20.38)} &\cellcolor{gray!20}  12.26 \\
    Q-MUL  &67.77\textcolor{gray}{(13.87)}  &79.60\textcolor{gray}{(3.78)}  &61.99\textcolor{gray}{(7.02)}  &34.87\textcolor{gray}{(11.59)}  &\underline{9.07}\cellcolor{gray!20}   \\
        OEU &65.59\textcolor{gray}{(11.69)} &78.09\textcolor{gray}{(5.29)} &55.68\textcolor{gray}{(0.71)} &51.55\textcolor{gray}{(5.09)} &\textbf{5.70}\cellcolor{gray!20} \\
     \midrule
    \multirow{2}[4]{*}{Method}  & \multicolumn{5}{c}{CIFAR-100} \\
\cmidrule{2-6}
   & FA & RA & TA & MIA & AG$\color{blue}\downarrow$ \\
    \midrule
    \multicolumn{6}{c}{\cellcolor{gray!20}The proportion of forgotten data samples to all samples is 50\%}\\
    \midrule
    Retrain  & 49.20 & 85.03 & 50.21 & 51.44 &\cellcolor{gray!20} 0 \\
    FT  & 83.76 \textcolor{gray}{(34.56)} & 91.24 \textcolor{gray}{(6.21)}& 63.92 \textcolor{gray}{(13.71)}& 20.10\textcolor{gray}{(31.34)} &\cellcolor{gray!20} 18.65\\
   GA  & 41.20 \textcolor{gray}{(8.00)} & 40.60 \textcolor{gray}{(44.43)}& 33.94 \textcolor{gray}{(16.27)}& 46.70 \textcolor{gray}{(5.44)}&\cellcolor{gray!20} 18.54\\
    IU  & 1.40\textcolor{gray}{(47.80)}  & 1.51\textcolor{gray}{(83.52)}  & 1.26\textcolor{gray}{(48.95)}  & 22\textcolor{gray}{(29.44)}  &\cellcolor{gray!20} 52.43 \\
    RL & 75.36 \textcolor{gray}{(26.16)} & 80.92\textcolor{gray}{(4.11)} & 60.57\textcolor{gray}{(10.36)} & 33.59\textcolor{gray}{(17.85)} &\cellcolor{gray!20}  14.62 \\
    $\ell_1$-sparse & 18.69 \textcolor{gray}{(19.78)} & 85.97 \textcolor{gray}{(3.47)} & 62.33 \textcolor{gray}{(1.43)} & 19.98 \textcolor{gray}{(20.22)} &\cellcolor{gray!20} 11.23 \\
    SalUn  & 73.56\textcolor{gray}{(24.36)} & 79.50\textcolor{gray}{(5.53)} & 59.31\textcolor{gray}{(9.10)} & 34.78\textcolor{gray}{(16.66)} &\cellcolor{gray!20} 13.91 \\
    Q-MUL  &67.93\textcolor{gray}{(18.73)}  &78.83\textcolor{gray}{(6.20)}  &60.38\textcolor{gray}{(10.17)}  &44.41\textcolor{gray}{(7.03)}  &\underline{10.53}\cellcolor{gray!20}    \\
        OEU &64.41\textcolor{gray}{(15.21)} &75.56\textcolor{gray}{(9.47)} &52.80\textcolor{gray}{(2.59)} &47.50\textcolor{gray}{(3.94)} &\textbf{7.80}\cellcolor{gray!20} \\
    \bottomrule
    \end{tabular}%
  \label{tab8}%
\end{table*}

\begin{table*}[htbp]
 \caption{Performance of various MU methods for MobileNetV2 with activations kept at full precision and weights quantized to 2 bits on
SVHN. The unlearning scenario is random data forgetting. \textbf{Bold} indicates the best performance and \underline{underline} indicates the runner-up. A performance gap against Retrain is provided in \textcolor{gray}{(•)}.}
  \centering
  \begin{tabular}{c|ccccc}
  \toprule
    \multirow{2}[4]{*}{Method}  & \multicolumn{5}{c}{SVHN} \\
\cmidrule{2-6}
   & FA & RA & TA & MIA & AG$\color{blue}\downarrow$ \\
  \midrule
  \multicolumn{6}{c}{\cellcolor{gray!20}The proportion of forgotten data samples to all samples is 10\%}\\
  \midrule
  Retrain & 94.25 & 99.99 & 94.17 & 9.33 & 0 \cellcolor{gray!20}\\
  FT      & 99.12 \textcolor{gray}{(4.87)} & 99.98 \textcolor{gray}{(0.01)} & 94.71 \textcolor{gray}{(0.54)} & 2.18 \textcolor{gray}{(7.15)} & 3.14 \cellcolor{gray!20}\\
  GA      & 99.21 \textcolor{gray}{(4.96)} & 99.43 \textcolor{gray}{(0.56)} & 94.84 \textcolor{gray}{(0.67)} & 10.77 \textcolor{gray}{(1.44)} & 1.91\cellcolor{gray!20} \\
  IU      & 98.65 \textcolor{gray}{(4.40)} & 98.83 \textcolor{gray}{(1.16)} & 93.96 \textcolor{gray}{(0.21)} & 2.43 \textcolor{gray}{(6.90)} & 3.17\cellcolor{gray!20} \\
  RL      & 96.81 \textcolor{gray}{(2.56)} & 98.36 \textcolor{gray}{(1.63)} & 94.46 \textcolor{gray}{(0.29)} & 20.20 \textcolor{gray}{(10.87)} & 3.84\cellcolor{gray!20} \\
  $\ell_1$-sparse & 99.07 \textcolor{gray}{(4.82)} & 99.97 \textcolor{gray}{(0.02)} & 94.67 \textcolor{gray}{(0.50)} & 2.68 \textcolor{gray}{(6.65)} & 3.00\cellcolor{gray!20} \\
  SalUn   & 96.42 \textcolor{gray}{(2.17)} & 98.19 \textcolor{gray}{(1.80)} & 94.51 \textcolor{gray}{(0.34)} & 20.39 \textcolor{gray}{(11.06)} & 3.84\cellcolor{gray!20} \\
  Q-MUL  & 96.47 \textcolor{gray}{(2.22)} & 99.48 \textcolor{gray}{(0.51)} & 94.96 \textcolor{gray}{(0.79)} & 9.16 \textcolor{gray}{(0.17)} & \underline{0.92} \cellcolor{gray!20}\\
      OEU &94.40\textcolor{gray}{(0.15)} &99.82\textcolor{gray}{(0.17)} &94.25\textcolor{gray}{(0.08)} &11.17\textcolor{gray}{(1.84)} &\textbf{0.56}\cellcolor{gray!20} \\
  \midrule
   \multirow{2}[4]{*}{Method}  & \multicolumn{5}{c}{SVHN} \\
\cmidrule{2-6}
   & FA & RA & TA & MIA & AG$\color{blue}\downarrow$ \\
  \midrule
  \multicolumn{6}{c}{\cellcolor{gray!20}The proportion of forgotten data samples to all samples is 30\%}\\
  \midrule
  Retrain & 93.12 & 100.0 & 93.81 & 11.21 & 0\cellcolor{gray!20} \\
  FT      & 99.28 \textcolor{gray}{(6.16)} & 99.98 \textcolor{gray}{(0.02)} & 94.72 \textcolor{gray}{(0.91)} & 2.13 \textcolor{gray}{(9.08)} & 4.04 \cellcolor{gray!20}\\
  GA      & 99.27 \textcolor{gray}{(6.15)} & 99.46 \textcolor{gray}{(0.54)} & 94.77 \textcolor{gray}{(0.96)} & 1.08 \textcolor{gray}{(10.13)} & 4.45 \cellcolor{gray!20}\\
  IU      & 98.63 \textcolor{gray}{(5.41)} & 98.83 \textcolor{gray}{(1.17)} & 93.05 \textcolor{gray}{(0.76)} & 2.92 \textcolor{gray}{(8.29)} & 3.91 \cellcolor{gray!20}\\
  RL      & 94.55 \textcolor{gray}{(1.43)} & 96.99 \textcolor{gray}{(3.01)} & 93.79 \textcolor{gray}{(0.02)} & 22.33 \textcolor{gray}{(11.12)} & 3.90\cellcolor{gray!20}\\
  $\ell_1$-sparse & 99.21 \textcolor{gray}{(6.09)} & 99.96 \textcolor{gray}{(0.04)} & 94.81 \textcolor{gray}{(1.00)} & 2.53 \textcolor{gray}{(8.68)} & 3.95\cellcolor{gray!20} \\
  SalUn   & 95.33 \textcolor{gray}{(2.21)} & 96.75 \textcolor{gray}{(3.25)} & 93.88 \textcolor{gray}{(0.07)} & 25.44 \textcolor{gray}{(14.23)} & 4.94 \cellcolor{gray!20}\\
  Q-MUL  & 96.80 \textcolor{gray}{(3.68)} & 99.46 \textcolor{gray}{(0.54)} & 94.95 \textcolor{gray}{(1.14)} & 12.77 \textcolor{gray}{(1.56)} & \underline{1.73} \cellcolor{gray!20}\\
  OEU &95.83\textcolor{gray}{(2.71)} &99.83\textcolor{gray}{(0.17)} &93.94\textcolor{gray}{(0.13)} &9.66\textcolor{gray}{(1.55)} &\textbf{1.14}\cellcolor{gray!20} \\
  \midrule
   \multirow{2}[4]{*}{Method}  & \multicolumn{5}{c}{SVHN} \\
\cmidrule{2-6}
   & FA & RA & TA & MIA & AG$\color{blue}\downarrow$ \\
  \midrule
  \multicolumn{6}{c}{\cellcolor{gray!20}The proportion of forgotten data samples to all samples is 50\%}\\
  \midrule
  Retrain & 92.57 & 95.83 & 93.19 & 12.46 & 0 \cellcolor{gray!20}\\
  FT      & 99.26 \textcolor{gray}{(6.69)} & 99.98 \textcolor{gray}{(4.15)} & 94.59 \textcolor{gray}{(1.40)} & 2.03 \textcolor{gray}{(10.43)} & 5.67\cellcolor{gray!20} \\
  GA      & 99.31 \textcolor{gray}{(6.74)} & 99.48 \textcolor{gray}{(3.65)} & 94.75 \textcolor{gray}{(1.56)} & 10.53 \textcolor{gray}{(1.93)} & 3.47\cellcolor{gray!20} \\
  IU      & 97.82 \textcolor{gray}{(5.25)} & 97.87 \textcolor{gray}{(2.04)} & 92.90 \textcolor{gray}{(0.29)} & 3.92 \textcolor{gray}{(8.54)} & 4.03\cellcolor{gray!20} \\
  RL      & 94.23 \textcolor{gray}{(1.66)} & 95.36 \textcolor{gray}{(0.47)} & 93.05 \textcolor{gray}{(0.14)} & 32.43 \textcolor{gray}{(19.97)} & 5.56\cellcolor{gray!20} \\
  $\ell_1$-sparse & 99.27 \textcolor{gray}{(6.70)} & 99.97 \textcolor{gray}{(4.14)} & 94.57 \textcolor{gray}{(1.38)} & 2.44 \textcolor{gray}{(10.02)} & 5.56\cellcolor{gray!20} \\
  SalUn   & 93.93 \textcolor{gray}{(1.36)} & 95.02 \textcolor{gray}{(0.81)} & 92.87 \textcolor{gray}{(0.32)} & 33.78 \textcolor{gray}{(21.32)} & 5.95\cellcolor{gray!20} \\
  Q-MUL  & 95.30 \textcolor{gray}{(2.73)} & 98.50 \textcolor{gray}{(2.67)} & 94.31 \textcolor{gray}{(1.12)} & 19.90 \textcolor{gray}{(7.44)} & \underline{3.49} \cellcolor{gray!20}\\
      OEU &95.33\textcolor{gray}{(2.76)} &99.79\textcolor{gray}{(3.96)} &93.45\textcolor{gray}{(0.26)} &12.18\textcolor{gray}{(0.28)} &\textbf{1.82}\cellcolor{gray!20} \\
  \bottomrule
  \end{tabular}
  \label{tab9}
\end{table*}

\end{document}